\documentclass{article}
\usepackage{amsmath}
\usepackage{amssymb}
\usepackage{amsthm}
\usepackage{bbm}
\usepackage{booktabs}
\usepackage{array}
\usepackage{xcolor}
\usepackage{hyperref}
\hypersetup{
	colorlinks=true,
	linkcolor=blue!70!black,
	citecolor=blue!70!black,
	urlcolor=blue!70!black
}
\usepackage{graphicx}
\usepackage{thm-restate}

\newtheorem{lemma}{Lemma}

\newtheorem{remark}{Remark}

\newcommand{\defeq}{:=}

\newcommand{\norm}[1]{\left\lVert#1\right\rVert}

\newcommand{\inprod}[2]{\left\langle#1, #2\right\rangle}

\DeclareMathOperator*{\argmax}{arg\,max}

\newcommand{\R}{\mathbb{R}}

\newcommand{\half}{\frac{1}{2}}

\newcommand{\0}{\mathbf{0}}
\newcommand{\1}{\mathbf{1}}
\newcommand{\ind}{\mathbb{I}}

\newcommand{\Brace}[1]{\left\{#1\right\}}

\newcommand{\mmp}{\mathbf{p}}
\newcommand{\mq}{\mathbf{q}}

\newcommand{\vf}{\mathbf{f}}

\newcommand{\vm}{\mathbf{m}}

\newcommand{\vp}{\mathbf{p}}
\newcommand{\vq}{\mathbf{q}}

\newcommand{\vu}{\mathbf{u}}

\newcommand{\vx}{\mathbf{x}}
\newcommand{\vy}{\mathbf{y}}

\newcommand{\calD}{\mathcal{D}}

\newcommand{\calK}{\mathcal{K}}

\newcommand{\calS}{\mathcal{S}}

\newcommand{\Method}{{\scalebox{1.15}{$\phi$}}-balancing}

\def\bb#1\ee{\begin{align*}#1\end{align*}}
\def\bba#1\eea{\begin{align}#1\end{align}}
\def\bbb#1\eee{\begin{align}#1\end{align}}

\usepackage{amsmath, amssymb}
\usepackage{xcolor}
\usepackage[most]{tcolorbox}
\usepackage{enumitem}

\definecolor{icmlblue}{RGB}{50, 65, 85}
\newtcolorbox{procbox}[2][]{%
  enhanced,
  colback=icmlblue!5!white,
  colframe=icmlblue,
  coltitle=white,
  fonttitle=\bfseries\sffamily,
  title={#2},
  rounded corners,
  boxrule=1.8pt,
  #1
}
\usepackage{microtype}
\usepackage{graphicx}
\usepackage{subcaption}
\usepackage{caption}
\usepackage{booktabs} 
 \usepackage{multirow}
\usepackage{bbm}
\usepackage{siunitx}

\usepackage{hyperref}

\usepackage[most]{tcolorbox}
\usepackage{booktabs}
\usepackage{multirow}
\usepackage{adjustbox}
\usepackage[table]{xcolor}
\usepackage{makecell}

\definecolor{OursRowBg}{HTML}{F3F6FF}
\newcommand{\score}[2]{#1\,{\scriptsize$\pm$}\,#2}
\newcommand{\ours}[1]{\cellcolor{OursRowBg}#1}

\usepackage{xcolor}
\newcommand{\rowbg}[2]{\colorbox{#1}{$\displaystyle #2$}}

\usepackage[accepted]{icml2026}

\usepackage{amsmath}
\usepackage{amssymb}
\usepackage{mathtools}
\usepackage{amsthm}

\usepackage[capitalize,noabbrev]{cleveref}

\usepackage[textsize=tiny]{todonotes}

\icmltitlerunning{$\phi$-Balancing for Mixture-of-Experts Training}

\begin{document}

\twocolumn[
  \icmltitle{{\scalebox{1.3}{$\phi$}}-Balancing for Mixture-of-Experts Training}



  \icmlsetsymbol{equal}{*}
  
  \begin{icmlauthorlist}
    \icmlauthor{Lizhang Chen}{equal,yyy}
    \icmlauthor{Jonathan Li}{equal,yyy}
    \icmlauthor{Qi Wang}{equal,yyy} \\
    \icmlauthor{Runlong Liao}{yyy}
    \icmlauthor{Shuozhe Li}{yyy}
    \icmlauthor{Chen Liang}{xxx}
    \icmlauthor{Ni Lao}{yyy}
    \icmlauthor{Qiang Liu}{yyy}
  \end{icmlauthorlist}

  \icmlaffiliation{yyy}{University of Texas at Austin}
  \icmlaffiliation{xxx}{Northwestern University}

  \icmlcorrespondingauthor{Ni Lao}{nlao@utexas.edu}
  \icmlcorrespondingauthor{Qiang Liu}{lqiang@cs.utexas.edu}

  \icmlkeywords{Mixture-of-Experts, Load Balancing, Transformer, Sparsity}

  \vskip 0.3in
]



\printAffiliationsAndNotice{\icmlEqualContribution}

\begin{abstract}
  Mixture-of-Experts (MoE) models rely on balanced expert utilization to fully realize their scalability. However, existing load-balancing methods are largely heuristic and operate on noisy mini-batch assignment statistics, introducing bias relative to population-level objectives. We propose \textbf{\Method{}}, a principled framework that directly targets population-level expert balance by minimizing a strictly convex, symmetric, and differentiable potential of the expected routing distribution. Using convex duality, we derive an equivalent min-max formulation and obtain a simple online algorithm via mirror descent, yielding an efficient EMA-based routing adjustment with negligible overhead. Across large-scale pretraining and downstream fine-tuning, \Method{} consistently outperforms prior Switch-style and loss-free baselines, demonstrating more stable and effective expert utilization.
\end{abstract}

\section{Introduction}\label{sec:into}

Mixture-of-Experts (MoE) Transformers have emerged as an effective approach for scaling deep learning models by dynamically selecting a small subset of expert modules for each input token. This strategy substantially increases model capacity while keeping computation nearly constant \citep{shazeer2017outrageously,fedus2022switch}, enabling large-scale language and vision models with billions of parameters to operate at roughly constant FLOPs \citep{lepikhin2021gshard,riquelme2021scaling,fedus2022switch}.

\begin{algorithm}[t!]
  \caption{\rowbg{blue!8}{\text{\Method{}}} for one MoE layer}
  \label{alg:clb}
  \begin{algorithmic}[1]
    \REQUIRE strictly convex, symmetric, and differentiable $\phi$, $\eta\in(0,1]$, $\alpha>0$, $\vm\gets\0$, routing frequencies $f_e$ \eqref{eq:pandf}
    \STATE Compute routing probabilities $p_{i,e}$ for each token $i$
    \STATE $\mmp_e \leftarrow \frac{1}{T}\sum_{i=1}^T p_{i,e}$ for $e = 1,\dots,E$ \hfill (expert loads)
    \STATE Let $\mmp = (\mmp_1,\dots,\mmp_E)$
    \STATE $\vm \leftarrow (1-\eta)\vm + \eta\mmp$ \hfill (EMA of loads)
    \STATE $\mathcal{L}_{\text{aux}} \gets
    \left\{
    \begin{aligned}
    \rowbg{red!8}{
    \textbf{ST-MoE:}\;\;
    \textstyle\sum_{e=1}^E f_e\mmp_e~~~~~
    }
    \\[-2pt]
    \rowbg{blue!8}{
    \textbf{Ours:}\;\;
    \textstyle\sum_{e=1}^E \nabla\phi(\vm)_e\mmp_e}
    \end{aligned}
    \right.$
    \STATE Update model using $\nabla\big(\mathcal{L}_{\text{task}} + \alpha \cdot E \cdot\mathcal{L}_{\text{aux}}\big)$
  \end{algorithmic}
\end{algorithm}

\begin{figure}[t!]
    \centering
    \includegraphics[width=0.9\linewidth]{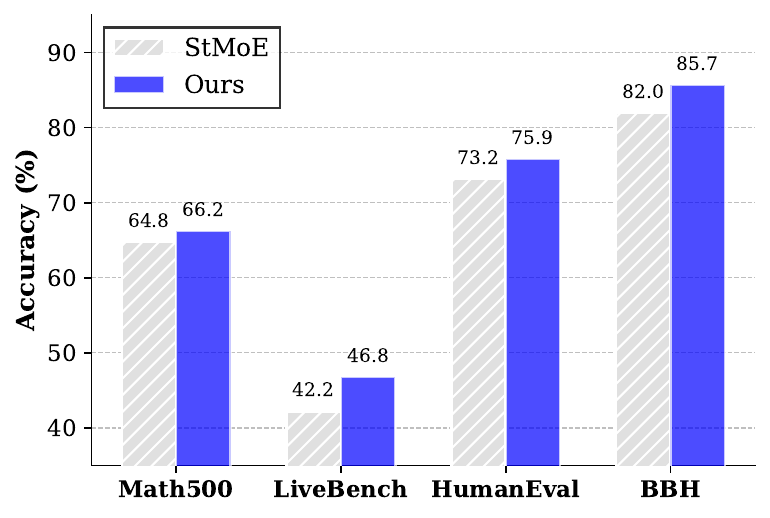}
    \caption{\textbf{Performance gains on reasoning and code generation benchmarks.} We compare the proposed method (\textit{Ours}) against the ST-MoE baseline on the Moonlight-16B-A3B-Instruct architecture~\cite{LiuSY25}. The proposed approach  outperforms the baseline across all selected tasks, yielding significant gains in mathematical reasoning (Math500), general capability (LiveBench), code synthesis (HumanEval), and logic (BBH).}
    \vspace{-0.3cm}
    \label{fig:moonlight_comparison}
\end{figure}

A key challenge in MoE training is to ensure balanced utilization of experts, which is essential for fully leveraging model capacity and avoiding performance degradation. A number of methods have been proposed to address this challenge, including Switch-style load-balancing losses \citep{shazeer2017outrageously, lepikhin2021gshard, fedus2022switch} and more recent loss-free balancing approaches \citep{wang2024auxiliary}. However, an often unspoken issue is that most existing balancing objectives are heuristic in nature, as they do not correspond to minimizing a well-defined population-level objective. In principle, the true goal is to achieve balanced expert usage under the whole data distribution. In contrast, widely used methods such as Switch-style MoE (ST-MoE) rely on per-mini-batch statistics and realized expert assignment frequencies, which introduce systematic bias relative to population-level uniformity objectives.

We propose \Method{}, a principled load-balancing framework that directly targets population-level expert balance. Our approach formulates load balancing as the minimization of a strictly convex, symmetric, and differentiable potential $\phi$ applied to the population mean routing distribution. To avoid the bias introduced by per-batch approximation, we adopt a min-max formulation via convex duality and apply online mirror descent to solve the resulting inner problem. This yields a simple yet broad family of algorithms, shown in Algorithm~\ref{alg:clb}, that maintains an exponential moving average (EMA) of routing probabilities with negligible overhead, processed through the mirror map $\nabla \phi$. 

Empirically, we find that \Method{} consistently outperforms ST-MoE across a wide range of settings (Figure~\ref{fig:moonlight_comparison}), including pretraining MoE-augmented Gemma models~\cite{Kamath25, Liang25}, where we systematically scale the number of active parameters $N$, expert count $E$, and routing granularity $G$ under controlled compute budgets, and ablations on EMA-based load tracking, the choice of mirror map $\phi$, and the EMA decay rate. While many choices for $\phi$ are possible, we recommend the negative entropy function as the most effective in practice.

We further evaluate per-benchmark LoRA fine-tuning on instruction-tuned MoE backbones~\cite{liu2024deepseek, dai2024deepseekmoe, LiuSY25} across seven benchmarks, totaling approximately 40,000 NVIDIA H100 HBM3-80GB GPU hours for all experiments. 

\section{Background on Mixtures of Experts}\label{sec:background}

We consider a standard decoder-only Transformer composed of $L$ layers. In a dense Transformer, each layer processes the input sequence via a Self-Attention module followed by a shared Feed-Forward Network (FFN). The MoE architecture replaces this dense FFN with a sparse modular layer consisting of a learnable router and a set of $E$ experts, $\{\text{FFN}_1, \dots, \text{FFN}_E\}$ \citep{shazeer2017outrageously}.

Let $\boldsymbol{x} = (\boldsymbol{x}_i)_{i=1}^T \in \mathbb{R}^{T \times d}$ denote the input to a layer, where $T$ is the sequence length and $d$ is the model hidden dimension. For each token $\boldsymbol{x}_i$, the MoE layer output $\boldsymbol{y}_i$ is computed as the router-weighted sum of the experts:
\begin{equation}
    \boldsymbol{y}_i = 
    \sum_{e=1}^{E} R(\boldsymbol{x}_i)_e \cdot \text{FFN}_e(\boldsymbol{x}_i; d_\text{ffn}).
    \label{eq:moe_output}
\end{equation}
Here, each expert is parameterized as a standard two-layer MLP. Following recent state-of-the-art implementations \citep{shazeer2020glu, dai2024deepseekmoe, agarwal2025gpt}, we utilize the SwiGLU activation function, defined as:
\begin{equation}
    \text{FFN}_e(\boldsymbol{u}) = W_2^{(e)} \cdot \text{SwiGLU}(W_1^{(e)} \boldsymbol{u}),
\end{equation}
where $W_1^{(e)} \in \mathbb{R}^{d_\text{ffn} \times d}$ and $W_2^{(e)} \in \mathbb{R}^{d \times d_\text{ffn}}$ are independent parameters for expert $e$.

\subsection{Sparse Routing Mechanism}

The computational efficiency of MoEs relies on the routing function $R(\cdot)$, which enforces sparsity by directing each token to a small subset of $k$ experts (where $k \ll E$). The router typically consists of a learnable projection matrix $W_r \in \mathbb{R}^{E \times d}$. The \emph{routing weights} are determined by normalizing the projection scores over the top-$k$ indices \citep{shazeer2017outrageously}:
\begin{equation}
    R(\boldsymbol{x}) = \text{softmax}\left( \text{Top-}k(W_r \boldsymbol{x}) \right).
    \label{eq:routing_function}
\end{equation}
The $\text{Top-}k(\cdot)$ operator sets all logits to $-\infty$ except for the $k$ largest elements. Consequently, $R(\boldsymbol{x})_e$ is zero for all non-selected experts, allowing the model to skip the majority of expert computations. If we only have one activated expert, then we will not do \textit{softmax} to avoid zero gradient on the router logits. This conditional computation decouples parameter count from inference cost; however, it introduces the load-balancing challenges that we address in Section~\ref{sec:method}.

\subsection{Baseline Load Balancing Strategy}\label{sec:baseline_lbl}

\begin{figure*}[t!]
    \centering
    \includegraphics[width=0.98\linewidth]{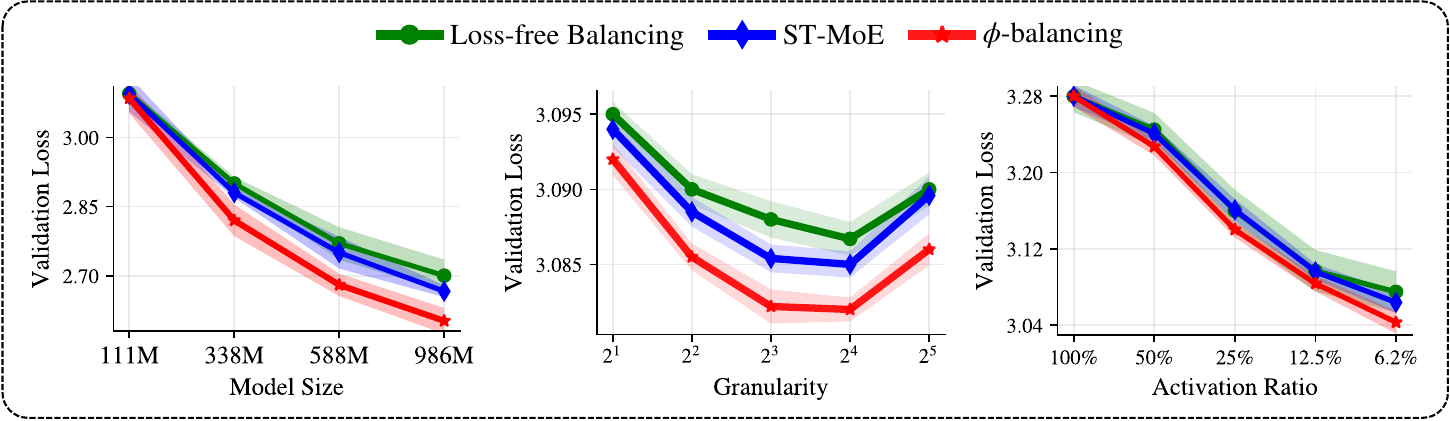}
\caption{\textbf{Pretraining scaling studies under controlled per-token compute.}
We evaluate routing stability and optimization across three orthogonal MoE scaling axes, while keeping the per-token computational cost (FLOPs) approximately constant within each study by adjusting expert size as needed.
\textbf{(Left) Active-parameter scaling:} we train models with $E=16$ experts and $A=2$ active experts per token, varying the number of \emph{active parameters} $N\in\{111\text{M},\,338\text{M},\,588\text{M},\,986\text{M}\}$.
\textbf{(Middle) Granularity scaling:} for fixed model size $M$ and activation ratio $A/E$, we vary the granularity factor $G\in\{2,4,8,16,32\}$ by increasing the total number of experts from $16$ to $256$ while proportionally shrinking each expert, so per-token FLOPs remain constant.
\textbf{(Right) Expert-count scaling (activation ratio):} we isolate the effect of $A/E$ by holding the compute budget $M$, the number of activated experts $A=2$, and the expert size (granularity) fixed, and varying the total number of experts $E\in\{8,16,32,64,128\}$.
}
    \label{fig:ablation_studies}
\end{figure*}

While the router constitutes a negligible fraction of the total parameter count, it orchestrates the utilization of the model's vast expert capacity. Here, we recall the standard auxiliary load-balancing loss (LBL) used by ST-MoE~\citep{fedus2022switch}. This formulation remains the dominant paradigm for training large-scale sparse models, including DeepSeek \citep{liu2024deepseek}, OlMoE \citep{muennighoff2024olmoe}, and DeepSpeed-MoE \citep{rajbhandari2022deepspeedmoe}.

The LBL objective encourages tokens to be distributed uniformly across the \(E\) experts. For a minibatch of \(T\) tokens, let \(\mmp_e\) denote the batch-mean \emph{pre-top-\(k\)} routing probability assigned to expert \(e\), let $p_{i,e}$ denote the routing probability of expert $e$ for token $\boldsymbol{x}_i$, and let \(f_e\) denote the realized routing frequency of expert \(e\) under top-\(k\) routing:
\begin{equation}\label{eq:pandf}
\begin{aligned}
    \mmp_e &= \frac{1}{T} \sum_{i=1}^{T} p_{i,e},
    \quad \text{where } p_{i,e} \defeq \text{softmax}(W_r \boldsymbol{x}_i)_e, \\
    f_e &= \frac{1}{kT} \sum_{i=1}^{T} \mathbb{I}\!\left(e \in \text{Top-}k(W_r \boldsymbol{x}_i)\right).
\end{aligned}
\end{equation}

The auxiliary loss is defined as the dot product of these two vectors:
\begin{equation}
    \mathcal{L}_\text{aux} = \sum_{e=1}^{E} f_e \cdot \mmp_e.
    \label{eq:switch_loss}
\end{equation}
As shown by~\citet{fedus2022switch}, minimizing \eqref{eq:switch_loss} encourages both the gating probabilities and the discrete selections to approach a uniform distribution.

\textbf{Loss-free balancing.}
Rather than introducing an explicit load-balancing loss, which can inject interference gradients and degrade task learning, \emph{loss-free balancing}~\cite{wang2024auxiliary} enforces balance by directly modifying the routing decision. Concretely, it adds a learned, expert-specific bias to the router logits \emph{before} the top-\(k\) selection, and updates these biases online using each expert's recent utilization.

\begin{figure*}[t!]
    \centering
    \includegraphics[width=\linewidth]{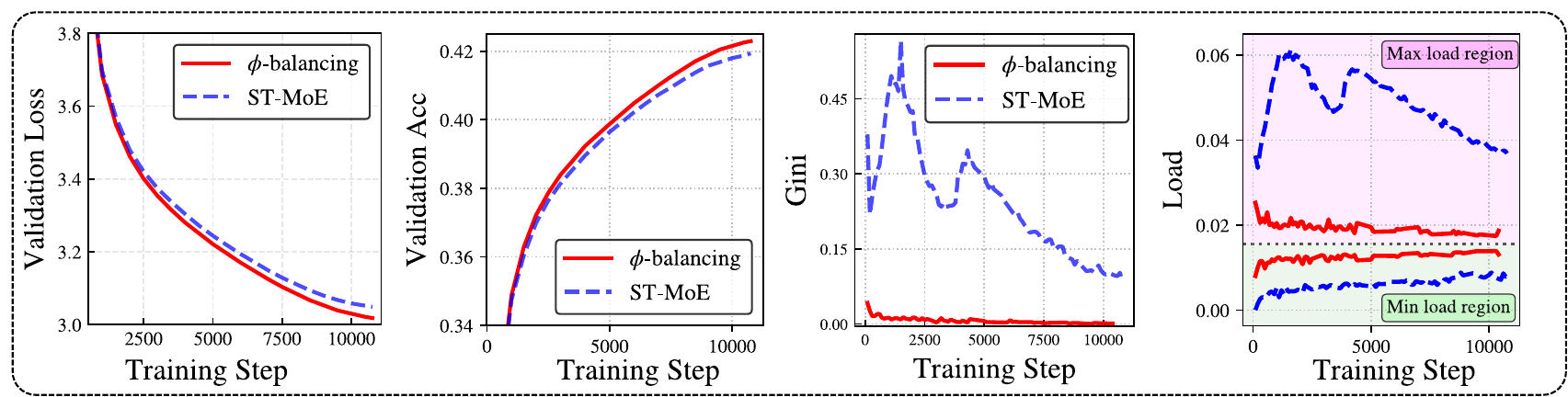}
    \vspace{-1em}
    \caption{\textbf{Pre-Training dynamics and expert utilization.} We compare $\phi$-balancing (red, solid) against ST-MoE (blue, dashed) over 10k steps. 
    \textbf{(Left) Validation Loss and Accuracy} show that $\phi$-balancing (negative entropy) achieves comparable or superior convergence. 
    \textbf{(Right) Gini coefficient and Expert Loading Analysis} demonstrates significantly lower routing imbalance for $\phi$-balancing. $\phi$-balancing maintains tighter bounds between maximum and minimum expert load, staying closer to the perfect allocation line (green) compared to ST-MoE, which exhibits higher variance in expert capacity usage.}
    \label{fig:balancing_metrics}
    \vspace{-1em}
\end{figure*}

\section{\Method}\label{sec:method}

In this section, we introduce the \Method{} loss. Unlike classical approaches that enforce balance only within individual mini-batches, our goal is to regularize \emph{global} expert usage over the entire data distribution. Concretely, we encourage globally uniform expert utilization via a strictly convex, symmetric, and differentiable potential function $\phi$.

\subsection{The Global Load-Balancing Objective}\label{sec:glbl-loss}

Let $\mmp(\boldsymbol{x}; \theta)\in \Delta^E$ denote the predicted routing probability vector for an input token $x$, parameterized by $\theta$ (i.e., $\mmp(\boldsymbol{x};\theta) = \text{softmax}(W_r \boldsymbol{x})$). For a specific expert $e$, $\mmp(\boldsymbol{x}; \theta)_e$ represents the probability mass assigned to that expert.
We define the \emph{global mean routing distribution} $\bar{\mmp}(\theta)$ as the expectation of the routing probabilities over the distribution of tokens $\calD$ induced by the training corpus:
\begin{equation}
    \bar{\mmp}(\theta) = \mathbb{E}_{\boldsymbol{x} \sim \mathcal{D}} \left[ \mmp(\boldsymbol{x}; \theta) \right],
\end{equation}
which satisfies $\sum_{e=1}^E \bar{\mmp}(\theta)_e = 1$. 

\paragraph{Load balancing via convex duality.}

Our goal is to encourage the token population-level routing distribution
$\bar{\mmp}(\theta)$ to  be uniform, so that in expectation, all experts are utilized equally over the data distribution. We formulate this objective as the optimization problem
\begin{equation}
    \label{eq:primal_objective}
    \min_\theta\mathcal{L}_{\text{bal}}(\theta)\defeq\min_\theta\phi(\bar{\mmp}(\theta)),
\end{equation}
where the potential function $\phi:\mathbb{R}^E\to\mathbb{R}$ is chosen to be strictly convex, symmetric, and differentiable.

The strict convexity and symmetry of $\phi$ guarantee that the objective in \eqref{eq:primal_objective} attains a unique minimum over the probability simplex at the uniform distribution, which is formalized by Lemma~\ref{lem:uniform-minimizer} in Appendix~\ref{app:proofs}. Representative choices of $\phi$ are summarized in Table~\ref{tab:phi_variants}. Importantly, $\phi$ is not restricted to additive or separable forms such as $\sum_e\psi(\mmp_e)$ and can capture more general dependencies across experts.

\paragraph{The estimation challenge.}
Optimizing \eqref{eq:primal_objective} directly with stochastic gradient descent is problematic. Since $\bar{\mmp}(\theta)$ is an expectation over the dataset, it must be estimated, and using the local mean of a mini-batch $\mathcal{B}$, denoted as $\hat{\mmp} = \frac{1}{|\mathcal{B}|} \sum_{x \in \mathcal{B}} \mmp(\boldsymbol{x}; \theta)$, introduces significant bias. Because $\phi$ is non-linear, the expectation of the function is not the function of the expectation:
\begin{equation}
    \mathbb{E}_{\mathcal{B}} [\phi(\hat{\mmp})] \neq \phi\! \left( \mathbb{E}_{\mathcal{B}} [\hat{\mmp}] \right) = \phi(\bar{\mmp}(\theta)).
\end{equation}
For small batch sizes, \emph{this bias artificially forces the router to balance every individual mini-batch rather than the global distribution}, potentially degrading performance.

\paragraph{Duality and mirror descent.}

To address the estimation challenges induced by batch-wise statistics, we leverage convex duality to decouple population-level estimation from per-batch updates. Using the identity
\begin{equation}
    \phi(\mmp)=\sup_{\mq \in \mathbb{R}^E}
        \langle \mmp, \mq \rangle
        -
        \phi^*(\mq),
\end{equation}
we obtain the min-max problem
\begin{equation}
\label{eq:MLB-minmax}
\min_{\theta}
\max_{\mq \in \mathbb{R}^E}
\left(
\mathbb{E}_{\boldsymbol{x} \sim \mathcal{D}}
[
\inprod{ \mmp(\boldsymbol{x};\theta) }{\mq }
]
-
\phi^*(\mq)
\right),
\end{equation}
where $\mq \in \mathbb{R}^E$ denotes the dual variable. Intuitively, each component $\mq_e$ represents the accumulated congestion cost of expert $e$. When an expert becomes over-utilized (large $\mmp_e$), its price $\mq_e$ increases, amplifying the penalty $\inprod{\mmp}{\mq}$ in the primal objective. This encourages the router to shift probability mass toward under-utilized experts.

For any fixed $\theta$, strict convexity and the first-order optimality condition imply that the inner maximization problem admits a unique maximizer given by
$\mq^\star
=
\nabla \phi\!\left( \bar{\mmp}(\theta) \right)$. Computing $\mq^\star$ exactly is infeasible in practice, as it requires access to the full data distribution. Moreover, directly applying gradient ascent on the dual variable suffers from high variance when only mini-batch estimates are available. Instead, we exploit the convex structure of the dual problem and adopt mirror descent, which naturally yields a stable online estimator.

Denote by $\mmp_t$ the empirical mean routing distribution over the mini-batch $\mathcal{B}_t$ at iteration $t$: 
\[
\mmp_t
\defeq
\frac{1}{|\mathcal{B}_t|}
\sum_{x \in \mathcal{B}_t}
\mmp(\boldsymbol{x};\theta_t). 
\]

\begin{tcolorbox}[
    colback=gray!8,
    colframe=gray!8,
    boxrule=0pt,
    arc=1mm,
    left=2mm,
    right=2mm,
    top=1mm,
    bottom=1mm
]
A single mirror ascent step \citep{beck2003mirror} on the dual objective~\eqref{eq:MLB-minmax} is equivalent to maintaining an exponential moving average (EMA) of the batch routing distributions followed by a price update:
\begin{equation}\label{eq:mirror_descent}
    \begin{aligned}
        \vm_{t+1}&\gets(1-\eta)\vm_t+\eta\mmp_t\\
        \mq_{t+1}&\gets\nabla\phi(\vm_{t+1}),
    \end{aligned}
\end{equation}
where $\vm$ represents the primal variable corresponding to $\vq$ and $\eta\in(0,1]$ is the step size.
\end{tcolorbox}

The full derivation is provided in Appendix~\ref{app:proofs}.

\begin{table*}[t]
\centering
\caption{\textbf{Summary of \Method{} variants.} The choice of the potential function $\phi$ determines the relationship between the accumulated expert usage state $\vm_{t+1}$ and the auxiliary loss $\mathcal{L}_{\text{aux}}$. Here, summations are taken over the experts $e$, and $q$ denotes the conjugate exponent such that $\frac{1}{p}+\frac{1}{q} = 1$. We follow the convention $0\log0\defeq0$.}
\label{tab:phi_variants}
\begin{sc}
\resizebox{\textwidth}{!}{%
\begin{tabular}{llll}
\toprule
\textbf{Variant} & \textbf{Primal Potential} $\phi(\mmp)$ & \textbf{Dual Potential} $\phi^*(\mq)$ & \textbf{Auxiliary Loss} $\mathcal{L}_{\textnormal{aux}}$ \\ 
\midrule
Euclidean Norm ($p=2$)
& $\frac{1}{2}\|\mmp\|_2^2$ 
& $\frac{1}{2}\|\mq\|_2^2$ 
& $\sum \mmp_{t,e} \cdot \vm_{t+1,e}$ \\
\midrule
$\ell_p$ Norm ($p>1$)
& $\frac{1}{p}\|\mmp\|_p^p$ 
& $\frac{1}{q}\|\mq\|_q^q$ 
& $\sum \mmp_{t,e} \cdot \operatorname{sgn}(\vm_{t+1,e})|\vm_{t+1,e}|^{p-1}$ \\
\midrule
Soft $\ell_1$ Norm ($\delta>0$)
& $\sum(|\mmp_e|-\delta\log(\frac{1}{\delta}|\mmp_e|+1))$
& $\sum-\delta(|\mq_e|+\log(1-|\mq_e|))^*$
& $\sum\mmp_{t,e}\cdot\vm_{t+1,e}(|\vm_{t+1,e}|+\delta)^{-1}$\\
\midrule
Negative Entropy 
& $\sum \mmp_e \log \mmp_e$ 
& $\sum \exp(\mq_e - 1) $ 
& $\sum \mmp_{t,e} \cdot (\log \vm_{t+1,e} + 1)$ \\
\midrule
Tsallis Entropy ($\alpha>0,\alpha\neq1$)
& $\sum(\mmp_e^\alpha-\mmp_e)(\alpha-1)^{-1}$
& \textnormal{no simple closed form}
& $\sum \mmp_{t,e}\cdot(\alpha\vm_{t+1,e}^{\alpha-1}-1)(\alpha-1)^{-1}$ \\
\midrule
R\'enyi Entropy ($\alpha\in(0,1)$)
& $\frac{1}{\alpha-1}\log(\sum\vp_e^\alpha)$
& \textnormal{no simple closed form}
& $\sum \mmp_{t,e}\cdot(\alpha\vm_{t+1,e}^{\alpha-1})((\alpha-1)\sum\vm_j^\alpha)^{-1}$ \\
\midrule
Pseudo-Huber ($\delta>0$)
& $\sum(\sqrt{\mmp_e^2+\delta^2}-\delta)$
& $\sum(-\delta\sqrt{1-\mq_e^2}+\delta)^\dagger$
& $\sum\mmp_{t,e}\cdot\vm_{t+1,e}(\vm_{t+1,e}^2+\delta^2)^{-\half}$\\
\midrule
Log-cosh ($\beta>0$)
& $\sum\frac{1}{\beta}\log\cosh(\beta\mmp_e)$
& $\sum(\frac{1+\mq_e}{2\beta}\log(1+\mq_e)+\frac{1-\mq_e}{2\beta}\log(1-\mq_e))^\ddagger$
& $\sum\mmp_{t,e}\cdot\tanh(\beta\vm_{t+1,e})$\\
\midrule
Softplus
& $\sum\log(\exp(\mmp_e)+1)$
& $\sum(\mq_e\log\mq_e+(1-\mq_e)\log(1-\mq_e))^\S$
& $\sum\mmp_{t,e}\cdot(\exp(-\vm_{t+1,e})+1)^{-1}$\\
\bottomrule
\end{tabular}%
}
\end{sc}
\scriptsize\flushleft
$^*$when $\|\mq\|_\infty<1$, otherwise $\infty$\quad$^\dagger$when $\|\mq\|_\infty\leq1$, otherwise $\infty$\quad$^\ddagger$when $|\mq_e|<1$, otherwise $0$ when $|\mq_e|=1$, otherwise $\infty$\quad$^\S$when $\mq\in[0,1]^E$, otherwise $\infty$
\end{table*}

Using $\mq_{t+1}$ as an approximation for $\mq^\star$, the loss w.r.t.\ $\theta$ becomes 
\begin{equation}
    \mathcal{L}_{\text{aux}}=\inprod{\mmp_t}{\mq_{t+1}}=\sum_{e=1}^E \mmp_{t,e}\nabla\phi(\vm_{t+1})_e,
\end{equation}
which yields Algorithm~\ref{alg:clb}. Note that we should apply the stop-gradient operator to $\vm_{t+1}$ (and hence $\mq_{t+1}$) when optimizing the router, so that gradients flow only through $\mmp_t$. 

\paragraph{Related methods.} 
As shown in Algorithm~\ref{alg:clb}, our method differs from the ST-MoE loss only in replacing the realized frequency $f_e$ in $\mathcal{L}_{\text{switch}} \propto \sum_{e} f_e \cdot \mmp_{e}$ with our $\nabla\phi(\vm_{t+1})_e$. The \emph{hard dispatch fraction} $f_e$ (the percentage of tokens actually sent to expert $e$) introduces discrete, non-differentiable assignment noise. In contrast, our method relies solely on the dual variable $\mq$, which tracks the history of the \emph{soft routing probabilities}. Consequently, our regularizer operates entirely within the smooth probability space, avoiding the instability associated with discrete routing decisions.

DeepSeek MoE \citep{wang2024auxiliary, deepseekv2} similarly maintains an EMA of recent expert loads to dynamically update per-expert routing-score biases before the top-$k$ decision. However, this approach still relies on the hard routing frequency $f_e$ and does not correspond to principled optimization of a population-level objective as in our derivation.

\subsection{Examples of $\phi$}
\label{sec:MLB-potentials}

The behavior of the min-max LBL \eqref{eq:MLB-minmax} is governed by the potential $\phi$. This function determines how the accumulated routing statistics (dual vector $\mq$) are mapped to the expert prices (primal vector $\vm$) according to \eqref{eq:mirror_descent}. We summarize in Table~\ref{tab:phi_variants} several examples of $\phi$, all of which are strictly convex, symmetric, and differentiable. 

\paragraph{Euclidean potential.}
Setting the potential to the squared Euclidean norm $\phi(\vm) = \frac{1}{2}\|\vm\|_2^2$ yields the conjugate $\phi^*(\mq) = \frac{1}{2}\|\mq\|_2^2$. Since the link function is defined as $\mq = \nabla \phi(\vm)$, this choice induces the identity map, effectively equating the price vector to the state: $\mq_{t+1} = \vm_{t+1}$.

\paragraph{$\ell_p$ potentials.}
A simple smooth family parameterized by $p>1$ is \[ \phi(\vm)=\frac{1}{p}\|\vm\|_p^p=\frac{1}{p}\sum_{e=1}^E \vm_e^p, \] which yields the link function $\mq=\nabla\phi(\vm)=\vm^{p-1}$ (since $\vm\in\Delta^E$ is nonnegative). The exponent $p$ controls the elasticity of the pricing mechanism: \begin{itemize}
    \item \emph{$p\to1$ (dampened):} The exponent vanishes, driving prices toward uniformity regardless of usage history. This effect is also approximated by the \emph{soft $\ell_1$ potential} with \[ \phi(\vm)=\|\vm\|_1-\delta\left\|\log\!\left(\frac{1}{\delta}|\vm|+1\right)\right\|_1, \] and link function \[ \mq=\nabla\phi(\vm)=\frac{\vm}{|\vm|+\delta}. \]
    \item \emph{$p\to\infty$ (aggressive):} The exponent diverges, causing small disparities in usage to result in extreme price penalties.
\end{itemize}

\paragraph{Negative Shannon entropic potential.} 
Setting $\phi(\vm)=\sum\vm_e\log\vm_e$ yields the dual relationship \[ \mq=\nabla\phi(\vm)=\log(\vm)+\1. \] This establishes an exponential link between the primal distribution and the dual prices, i.e.\ $\vm_e\approx\exp(\mq_e)$. Unlike the linear response, this penalizes low-probability experts aggressively, effectively acting as a soft barrier.

\paragraph{Negative Tsallis entropic potential.}
The negative Tsallis entropy is parameterized by $\alpha>0$ and $\alpha\neq1$ and defined as
\[
\phi(\vm) = \sum_{e=1}^E \frac{\vm^\alpha_e - \vm_e}{\alpha-1},
\]
with gradient
\[ \nabla\phi(\vm) = \frac{\alpha\vm^{\alpha-1}-1}{\alpha-1}. \]
It converges to the negative Shannon entropy in the limit as $\alpha \to1$. 

\paragraph{Negative R\'enyi entropic potential.}
Another family that generalizes the negative Shannon entropy is the negative R\'enyi entropy, parameterized by $\alpha\in(0,1)$ and defined as
\[
\phi(\vm)=\frac{1}{\alpha-1}\log\left(\sum_{e=1}^E \vm_e^{\alpha}\right) \] with gradient
\[
\nabla\phi(\vm)=\frac{\alpha\vm^{\alpha-1}}{(\alpha-1)\sum_{j=1}^E\vm_j^\alpha}. \]
It converges to the negative Shannon entropy in the limit as $\alpha\to1$.

\paragraph{Robust potentials.} There are several choices of $\phi$ based on \emph{smooth robust losses}, whose sigmoidal gradients control how aggressively large usage disparities translate into prices. The \emph{pseudo-Huber potential} behaves quadratically in a neighborhood of the origin but smoothly transitions to an approximately linear regime, thereby limiting the influence of extreme outliers. The precise transition scale is controlled by the parameter $\delta>0$. Similar properties are enjoyed by the \emph{log-cosh} and \emph{softplus potentials}, whose respective link functions $\mq=\tanh(\beta\vm)$ and \[ \mq=\sigma(\vm)\defeq\frac{1}{\exp(-\vm)+1} \] are especially well-behaved.
\section{Experiments}\label{sec:experiments}

We evaluate \Method{} across a range of settings and find that \Method\ with negative entropy consistently performs best (Figure~\ref{fig:balancing_metrics}), outperforming Switch-style and loss-free load-balancing baselines across model scales, architectures, and downstream tasks. In large-scale Gemma pretraining, \Method{} yields more stable routing, lower validation loss, and substantially reduced capacity violations when varying model scale, expert count, and granularity. In downstream fine-tuning, these stability gains translate into stronger task performance and more consistent expert specialization across domains. Our ablations show that history-aware population tracking is critical for robustness, and that entropy-based potentials provide the best overall trade-off between routing stability and downstream accuracy. 

\subsection{Gemma-based Language Model Pretraining}\label{sec:exp-gemma}

We first evaluate \Method{} on MoE-augmented Gemma language models~\cite{Liang25}. Unless otherwise stated, all models use top-2 routing and are trained on C4 \citep{RaffelSRLNMZLL20} with the same Gemma-style pretraining recipe (see Appendix~\ref{app:hp} for details on hyperparameters) using negative entropy as $\phi$. Following the settings in ~\citet{tian2025towards}, we systematically vary (i) the number of \emph{active} parameters \(N\), (ii) the number of experts \(E\); and (iii) the MoE \emph{granularity} \(G\) (Figure~\ref{fig:ablation_studies}). 

\paragraph{Scaling active parameters.}
To study how \Method{} behaves across model scales, we train a family of MoE Transformers with \(E = 16\) experts and \(A = 2\) active experts per token, and vary the number of active parameters \(N\) in \(\{111\text{M}, 338\text{M}, 588\text{M}, 986\text{M}\}\). Here, \(N\) counts only the parameters that are touched for a single token under top-$2$ routing. For each scale, we compare \Method{} against standard Switch-style load balancing and loss-free load balancing. We see that the proposed \Method\ strategy consistently outperforms both baselines across all tested model scales, achieving the lowest validation loss at the 986M parameter mark.

\paragraph{Scaling the number of experts.}
Next, we fix the total active parameter budget and per-token compute, and vary the number of experts $E \in \{8, 16, 32, 64, 128\}$, keeping the number of active experts at \(A = 2\). As \(E\) increases, we proportionally reduce the size of each expert so that the total FLOPs per token remain approximately constant. This isolates the effect of expert multiplicity, allowing us to test whether \Method{} continues to stabilize routing when many small experts are available. We see that the performance gap between \Method\ and both baselines is maintained across the entire range of activation ratios, indicating that the benefit of \Method\ is robust to the level of model sparsity.

\paragraph{Scaling granularity.}
Finally, we study the effect of MoE granularity by varying the granularity factor \(G \in \{2, 4, 8, 16, 32\}\), defined as \(G = d_{\mathrm{ff}} / d_{\mathrm{expert}}\), where \(d_{\mathrm{expert}}\) denotes the hidden dimension of a single expert and \(d_{\mathrm{ff}}\) is the total feed-forward dimension of the MoE layer. Increasing \(G\) increases the total number of experts while proportionally decreasing the size of each expert, so that the overall capacity and per-token compute remain fixed and the activation ratio \(A / E\) is held constant. Intuitively, larger \(G\) corresponds to slicing the feed-forward capacity into finer-grained experts. This setting is particularly sensitive to routing instability, and serves as a stress test for \Method{} versus conventional load-balancing losses.

\begin{table}[t]
\centering
\caption{\textbf{Ablation on mirror maps $\phi$.}
We report validation loss and maximum global load-balance violation
($\text{MaxVio}_{\text{global}}$), defined in Appendix~\ref{app:notation}; lower is better.}
\label{tab:phi_ablation}
\small
\setlength{\tabcolsep}{6.5pt}
\renewcommand{\arraystretch}{1.18}
\begin{adjustbox}{width=0.97\columnwidth,center}
\begin{tabular}{@{}llcc@{}}
\toprule
\textbf{Family} & \textbf{Mirror map $\phi$} &
\textbf{Val.\ loss $\downarrow$} &
\textbf{$\text{MaxVio}_{\text{global}}$ $\downarrow$} \\
\midrule

\rowcolor{gray!15}\multicolumn{4}{@{}l@{}}{\textbf{Norm-based potentials}}\\
$\ell_p$ norm & $p=1$        & 3.142 & 0.770 \\
$\ell_p$ norm & $p=2$        & 3.098 & 0.610 \\
$\ell_p$ norm & $p=3$        & 3.103 & 0.640 \\
$\ell_p$ norm & $p=\infty$   & 3.116 & 0.760 \\
\addlinespace[2pt]
\specialrule{0.08em}{0.2em}{0.2em}

\rowcolor{gray!15}\multicolumn{4}{@{}l@{}}{\textbf{Entropic potentials}}\\

Entropy & Negative Shannon        & \textbf{3.084} & \textbf{0.104} \\
Entropy & Negative Tsallis ($\alpha = 1.10$) & 3.110 & 0.402 \\
Entropy & Negative R\'enyi ($\alpha = 0.95$) & 3.091 & 0.207 \\
\addlinespace[2pt]
\specialrule{0.08em}{0.2em}{0.2em}

\rowcolor{gray!15}\multicolumn{4}{@{}l@{}}{\textbf{Robust potentials}}\\
Robust  & Soft $\ell_1$ ($\delta>0$)   & 3.109 & 0.740 \\
Robust  & Pseudo-Huber ($\delta>0$)    & 3.112 & 0.750 \\
Robust  & Log-cosh ($\beta>0$)         & 3.110 & 0.745 \\
Robust  & Softplus        & 3.125 & 0.810 \\
\bottomrule
\end{tabular}
\end{adjustbox}
\end{table}

\paragraph{Ablation on mirror maps $\phi$.} As summarized in Table~\ref{tab:phi_ablation}, we ablate the choice of mirror map by training identical models with the same \Method{} objective, router, and hyperparameters, and varying only the potential $\phi$ used to compute the updates \eqref{eq:mirror_descent} with stop-gradient. The potentials are taken from Table~\ref{tab:phi_variants} in Section~\ref{sec:MLB-potentials}. For each $\phi$, we report validation loss and the global balance metric $\text{MaxVio}_\text{global}$, computed from the deviation of the held-out global mean routing distribution $\bar{\mmp}$ from uniform. For the negative Tsallis and R\'enyi entropies, we sweep several values of $\alpha$ and find that performance is maximized near the Shannon limit $\alpha\to1$ (Figure~\ref{fig:entropy_losses} in Appendix~\ref{app:ablations}).

\begin{table}[t]
\centering
\caption{\textbf{Ablation on global batch size.} Accuracy across methods and global batch sizes on the 986M active-parameter Gemma-MoE with $E=16$ and $A=2$. Higher is better. Best per method/column is in \textbf{bold}.}
\label{tab:batch_size_ablation}
\small
\setlength{\tabcolsep}{4.2pt}
\renewcommand{\arraystretch}{1.12}

\begin{tabular}{@{}l c c c c@{}}
\toprule
\textbf{Method} & \textbf{Batch Size} & \textbf{HellaSwag} $\uparrow$ & \textbf{MMLU} $\uparrow$ & \textbf{C-Eval} $\uparrow$ \\
\midrule

\multirow{3}{*}{ST-MoE}
& 32 & 62.82 & 41.96 & 42.58 \\
& 128   & 63.14 & 42.37 & 43.24 \\
& 512
& \textbf{63.34}
& \textbf{42.74}
& \textbf{43.87} \\
\midrule

\multirow{3}{*}{Loss-free}
& 32 & 62.38 & 41.58 & 42.87\\
& 128    & 62.73 & 42.03 & 43.46 \\
& 512
& \textbf{63.05}
& \textbf{42.46}
& \textbf{44.00} \\
\midrule

\multirow{3}{*}{Ours}
& 32 & 63.46 & 42.88 & 43.96 \\
& 128    & 63.60 & 43.02 & 44.18 \\
& 512
& \textbf{63.70}
& \textbf{43.18}
& \textbf{44.36} \\
\bottomrule
\end{tabular}
\end{table}

\paragraph{Ablation on global batch size.} Table~\ref{tab:batch_size_ablation} investigates the effect of global batch size on the 986M Gemma-MoE. Unsurprisingly, increasing the batch size improves downstream accuracy across all methods and benchmarks; however, \Method{} notably outperforms the strongest baselines even at smaller batch sizes, suggesting more effective population-level balancing. Compared with ST-MoE and loss-free balancing, \Method{} delivers both higher absolute accuracy and greater robustness to global batch size, making it the most effective choice in this ablation.

\sisetup{detect-weight=true, detect-family=true, table-number-alignment=center}

\begin{table}[t]
\centering
\caption{\textbf{Ablation on EMA tracking choice (routing probabilities vs.\ selection frequencies).}
We compare using an EMA of expert routing probabilities $\mmp_e$ against using an EMA of selection frequencies $f_e$ as in Algorithm~\ref{alg:ema_f}. }

\label{tab:scaling_n}
\small
\setlength{\tabcolsep}{8pt}
\renewcommand{\arraystretch}{1.12}
\begin{tabular}{@{}l c c c@{}}
\toprule
\textbf{$N$} & {\textbf{Dense}} & {\textbf{Frequency EMA}} & {\textbf{Probability EMA}} \\
\midrule
111M & 3.3768 & 3.089 & \bfseries 3.0847 \\
338M & 3.0136 & \bfseries 2.812 & 2.8200 \\
588M & 2.8611 & 2.685 & \bfseries 2.6800 \\
986M & 2.7142 & \bfseries 2.598 & 2.6019 \\
\bottomrule
\end{tabular}
\vspace{-0.25cm}
\end{table}

\paragraph{Ablation on EMA load tracking.}
Table~\ref{tab:scaling_n} studies how the choice of load statistic affects training. By default, we track expert load using an EMA of the router’s \emph{pre-top-$k$} assignment probabilities, which provides a smooth estimate of expected utilization. As an alternative, we replace this with an EMA of realized selection frequencies ($f_e$). As shown in Table~\ref{tab:scaling_n}, using the EMA of frequencies yields performance comparable to using the EMA of probabilities.

\begin{figure}[t]
\begin{center}
\includegraphics[width=0.47\columnwidth]{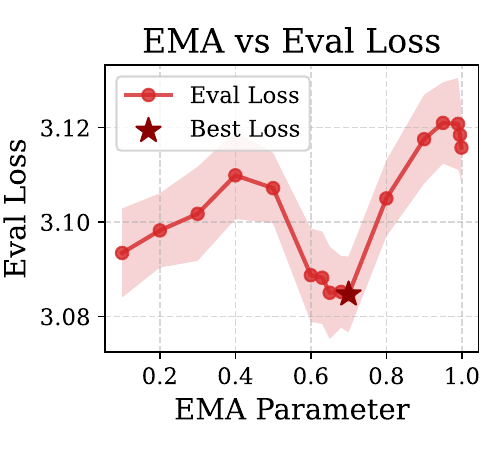}\hfill
\includegraphics[width=0.49\columnwidth]{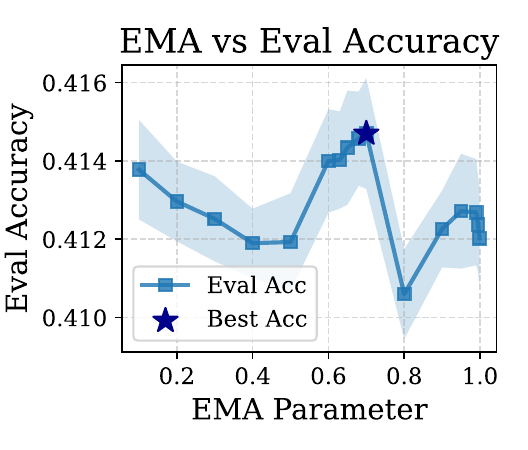}
\vspace{-1em}
\caption{
\textbf{Sensitivity analysis of the EMA decay parameter $\eta$.} Validation loss (red; left) and accuracy (blue; right) are shown as~$\eta$ varies over $[0,1]$.
}
\label{fig:ema_sweep}
\end{center}
\vskip -0.2in
\end{figure}

\paragraph{EMA decay sensitivity.}
We study how the EMA decay $\eta$ used to maintain the running estimate of global routing statistics affects training dynamics. Keeping the model, router, loss weights, and optimization settings fixed, we sweep $\eta$ over \( [0, 1] \), and rerun training under identical conditions. For each setting, we evaluate validation loss and accuracy at convergence, and plot both metrics against $\eta$ in Figure~\ref{fig:ema_sweep}. We see that the best trade-off is achieved for~$\eta \in [0.6, 0.7]$. Performance becomes unstable at high decay, where load estimates revert to single-batch statistics and exhibit high variance.

\begin{table*}[t]
\centering
\caption{\textbf{Domain specialization.} Accuracy across benchmarks (mean $\pm$ approx. std (half-width$/1.96$); higher is better). \textbf{Avg} is the unweighted mean over all benchmarks shown. Best per model/column is in \textbf{bold}.}
\label{tab:domain_specialization_all}
\small
\setlength{\tabcolsep}{4.2pt}
\renewcommand{\arraystretch}{1.08}

\begin{adjustbox}{width=\textwidth,center}
\begin{tabular}{@{}l l c c c c c c c c@{}}
\toprule
\multirow{2}{*}{\textbf{Model}} & \multirow{2}{*}{\textbf{Method}} &
\multicolumn{4}{c}{\textbf{Multi-Domain}} &
\multicolumn{1}{c}{\textbf{Code}} &
\multicolumn{2}{c}{\textbf{Math}} &
\multicolumn{1}{c}{\textbf{Avg}} \\
\cmidrule(lr){3-6}\cmidrule(lr){7-7}\cmidrule(lr){8-9}\cmidrule(lr){10-10}
& & \textbf{BBH} & \textbf{GLUE} & \textbf{LiveBench} & \textbf{GPQA} &
\textbf{HumanEval} & \textbf{GSM8K} & \textbf{Math500} & \textbf{Avg} \\
\midrule

\multirow{3}{*}{DeepSeek-MoE-Chat}
& Frozen checkpoint & \score{33.07}{2.08} & \score{55.97}{0.64} & \score{5.92}{0.62}  & \score{28.91}{1.28} & \score{39.63}{3.78} & \score{57.63}{0.93} & \score{14.80}{1.59} & 33.70 \\
& ST-MoE    & \score{69.86}{2.03} & \score{79.03}{0.53} & \score{14.62}{0.93} & \score{79.70}{1.14} & \textbf{\score{41.46}{3.81}} & \score{64.00}{0.91} & \score{15.00}{1.60} & 51.95 \\
& \ours{\textbf{Ours}}
& \ours{\textbf{\score{73.92}{1.80}}}
& \ours{\textbf{\score{80.41}{0.50}}}
& \ours{\textbf{\score{17.85}{0.87}}}
& \ours{\textbf{\score{82.34}{1.03}}}
& \ours{\score{40.21}{3.88}}
& \ours{\textbf{\score{66.28}{0.83}}}
& \ours{\textbf{\score{16.40}{1.50}}}
& \ours{\textbf{53.92}} \\
\midrule

\multirow{3}{*}{DeepSeek-V2-Lite}
& Frozen checkpoint & \score{35.42}{2.11} & \score{53.12}{0.64} & \score{13.93}{0.91} & \score{19.98}{1.13} & \score{37.20}{3.73} & \textbf{\score{69.20}{0.87}} & \score{19.40}{1.77} & 35.46 \\
& ST-MoE    & \score{57.34}{2.18} & \textbf{\score{74.88}{0.56}} & \score{16.85}{0.99} & \score{68.67}{1.31} & \score{45.12}{3.84} & \score{62.00}{0.92} & \score{20.20}{1.79} & 49.29 \\
& \ours{\textbf{Ours}}
& \ours{\textbf{\score{61.98}{1.98}}}
& \ours{\score{74.35}{0.59}}
& \ours{\textbf{\score{19.42}{0.95}}}
& \ours{\textbf{\score{72.91}{1.18}}}
& \ours{\textbf{\score{48.36}{3.62}}}
& \ours{\score{64.38}{0.85}}
& \ours{\textbf{\score{21.60}{1.64}}}
& \ours{\textbf{51.86}} \\
\midrule

\multirow{3}{*}{\makecell[l]{Moonlight-16B-A3B\\
-Instruct}}
& Frozen checkpoint & \score{59.10}{2.17} & \score{60.68}{0.63} & \score{19.57}{1.05} & \score{27.98}{1.27} & \score{72.56}{3.45} & \score{92.84}{0.49} & \textbf{\score{67.40}{2.09}} & 57.16 \\
& ST-MoE    & \score{82.00}{1.70} & \score{81.48}{0.50} & \score{42.21}{1.35} & \score{78.59}{1.16} & \score{73.17}{3.43} & \score{91.37}{0.53} & \score{64.80}{2.13} & 73.37 \\
& \ours{\textbf{Ours}}
& \ours{\textbf{\score{85.74}{1.54}}}
& \ours{\textbf{\score{83.02}{0.46}}}
& \ours{\textbf{\score{46.83}{1.24}}}
& \ours{\textbf{\score{81.92}{1.05}}}
& \ours{\textbf{\score{75.88}{3.24}}}
& \ours{\textbf{\score{92.92}{0.57}}}
& \ours{\score{66.20}{2.02}}
& \ours{\textbf{76.07}} \\
\bottomrule
\end{tabular}
\end{adjustbox}
\end{table*}

\begin{figure*}[t]
    \centering
    \includegraphics[width=\linewidth]{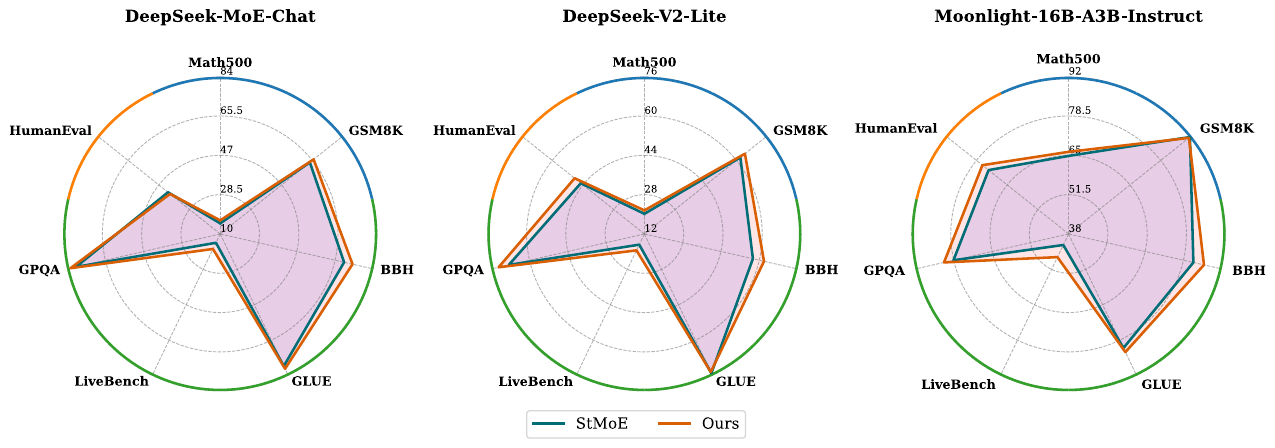}
    \vspace{-2em}
    \caption{\textbf{Performance comparison of ablation method combinations across three model architectures.} The radar charts illustrate the evaluation of DeepSeek-MoE-Chat, DeepSeek-V2-Lite, and Moonlight-16B-A3B-Instruct on seven diverse benchmarks. Radial axes represent the corresponding benchmark scores, identified by the labels and the color-coded outer rim segments. The proposed method (\textit{Ours}) demonstrates a consistent expansion of the performance capabilities compared to the ST-MoE baseline.}
    \vspace{-0.3cm}
    \label{fig:radar_plots}
\end{figure*}

\begin{figure*}[t]
    \centering
    \includegraphics[width=\linewidth]{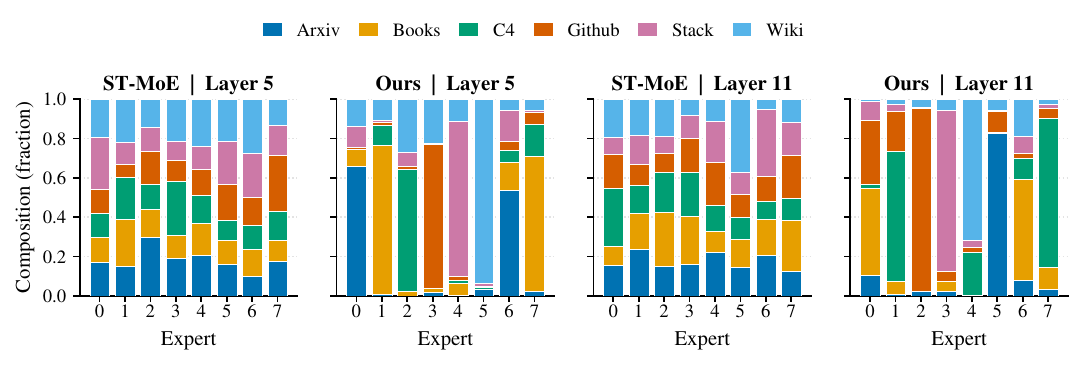}
    \vspace{-2em}
    \caption{\textbf{Domain specialization in routing.} Routed-token ratio (fraction of tokens) assigned to each expert
    (IDs 0--7) for different data domains (Arxiv, Books, C4, Github, Stack, Wiki) at two representative layers
    (Layer 5 and Layer 11). Compared to ST-MoE, our router
    exhibits sharper domain-to-expert preferences (stronger specialization), albeit with mildly uneven expert loads.}
    \label{fig:domain_specialization}
\end{figure*}

\subsection{Downstream Fine-Tuning}\label{sec:downstream}

We now evaluate \Method{} on three large MoE backbones with distinct architectures: \textbf{DeepSeek-MoE-16B-Chat}~\citep{dai2024deepseekmoe}, \textbf{DeepSeek-V2-Lite-Chat}~\citep{deepseekv2}, and \textbf{Moonlight-16B-A3B-Instruct}~\citep{LiuSY25}. We report results on seven benchmarks. For training, we use the training sets from Numina~\citep{numina_math_datasets}, and below benchmarks which cover three categories: (i) Mathematics—GSM8K~\citep{cobbe2021gsm8k} and MATH500~\citep{lightman2023lets}; (ii) Multi-domain tasks—BBH~\citep{suzgun2022challenging}, GLUE training mixture~\citep{socher2013recursive, rajpurkar2016squad, williams2018broad, warstadt2018neural, wang2019glue}, LiveBench~\citep{Zhang_2025}, and GPQA~\citep{rein2023gpqagraduatelevelgoogleproofqa}; and (iii) Code generation—HumanEval~\citep{chen2021evaluating}.  The results are shown in Table~\ref{tab:domain_specialization_all} and Figure~\ref{fig:radar_plots}.

\paragraph{Per-benchmark fine-tuning.}
To isolate specialization behavior and avoid confounding from heterogeneous multi-task mixing, we fine-tune \emph{each benchmark separately}. For each benchmark, we construct a 6{,}000-example training set by uniformly sampling 6{,}000 prompts from the benchmark’s training distribution when available; benchmarks with fewer than 6{,}000 total examples are supplemented to 6{,}000 with NuminaTest examples~\cite{numina_math_datasets}. All training targets include high-quality chain-of-thought reasoning produced by a strong teacher model (OpenAI GPT-5.2), ensuring consistent supervision quality across tasks and models
and complementing recent advances in reasoning-focused post-training, e.g.\ guided adversarial self-play~\citep{li2026learning}.

\paragraph{Optimization and adapters.}
All single benchmark runs use AdamW \citep{su2026the} with learning rate $5\times 10^{-5}$, weight decay 0.01, and 500 warmup steps. We perform parameter-efficient fine-tuning via LoRA with rank $r=8$, $\alpha=32$, dropout $0.1$, and apply adapters to \texttt{q\_proj}, \texttt{k\_proj}, \texttt{v\_proj}, \texttt{o\_proj}, \texttt{gate\_proj}, \texttt{up\_proj}, and \texttt{down\_proj}. We train for 3 epochs with global batch size 18 (about 1{,}000 optimization steps).

\paragraph{Tuning protocol.}
To ensure a fair comparison across routing regularizers, we independently tune---\emph{for each model and each method}---the learning rate, the load-balancing loss coefficient, and the history-regularization weight $\eta$ using a held-out validation split drawn from the training pool, while keeping all other hyperparameters fixed. This yields an extensive experimental grid spanning 3 backbones $\times$ 7 benchmarks $\times$ multiple routing regularizers.

\paragraph{Observations.}

As shown in Table~\ref{tab:domain_specialization_all}, \Method{} achieves state-of-the-art results in over 80\% of the 21 setups across  three models. In some setups, \Method{} even outperforms the next-best method by nearly 5\%. We see that the average performance across the benchmarks is consistently higher for \Method{}. We also observe \Method{}'s performance is better than ST-MoE's over 90\% of the time.

\paragraph{Domain specialization.}
The robust and history-aware regularization of \Method{} promotes the emergence of diverse expert behaviors. By balancing expert usage at the population level rather than forcing uniformity within each mini-batch, \Method{} enables experts to specialize along different functional dimensions, allowing the router to combine complementary experts in a manner reminiscent of ensemble methods (Figure~\ref{fig:domain_specialization}). This behavior contrasts with ST-MoE, whose batch-level load-balancing objective encourages the tokens in each mini-batch to be distributed uniformly across experts, effectively pushing every expert to learn every domain and suppressing specialization.
\section{Related Work}\label{sec:related}

\paragraph{Foundational MoE architectures and theory.} MoE routing has evolved from early load-balancing heuristics \citep{shazeer2017outrageously} to scalable systems such as GShard \citep{lepikhin2021gshard} and Switch Transformers \citep{fedus2022switch}. This architecture has since become the backbone of modern frontier models, including DeepSeek-V3 \citep{liu2024deepseek}, Qwen3 \citep{Yang25}, and OLMoE \citep{muennighoff2024olmoe}. Subsequent research has explored diverse variants such as Expert Choice Routing \citep{zhou2022mixture}, DeepSeekMoE's fine-grained segmentation \citep{dai2024deepseekmoe}, scaling laws for efficient dispatch \citep{tian2025towards}, and applications to different settings \citep{XuYDZLWCL26, ZhangCZ25}. Concurrently, theoretical understanding has deepened: recent works have established convergence guarantees for gating mechanisms \citep{NguyenHR24, le2025visual, thai2025shape, nguyen2025convergence} and analyzed MoE optimization dynamics through the lens of mirror descent \citep{fruytier2024learning}. We refer readers to comprehensive surveys on MoE models \citep{cai2025survey, mu2025comprehensive, zhang2025mixture}.
 
\paragraph{Refining load balancing.} The standard auxiliary load-balancing loss has faced scrutiny for suppressing expert specialization \citep{qiu2025demons, guo2025advancing} and introducing gradient interference \citep{qiu2024layerwise}. Proposed remedies range from calculating losses over global batches \citep{qiu2025demons} to exploiting orthogonality \citep{omi2025load, ChengDL25} and utilizing ternary rewards \citep{yan2025tcmoe}. Alternatively, some methods modify the LBL \citep{huang2024harder, zeng2024adamoe, LvMMQ26}, remove auxiliary gradients entirely \citep{wang2024auxiliary, yang2025latent}, or propose fully differentiable routing mechanisms like ReMoE \citep{wang2024remoe} to bypass discrete selection issues. While \citet{dai2022stablemoe} address the resulting routing instability via two-stage distillation, these approaches largely rely on heuristics or specific structural constraints.

\paragraph{Connections to optimization.} The core mechanism of \Method\ is based on mirror descent, which has roots in the classical optimization literature \citep{NemirovskiY83, beck2003mirror} as a generalization of gradient descent to the geometry induced by a chosen, strongly convex mirror map. More broadly, the use of convex potential functions as an algorithmic design primitive has also been explored in the generalization of Lion \citep{ChenLHRW0DLHLL23, liu2024distributed, chen2025demystifying} to its Lion-$\calK$ variants \citep{ChenLLL24, ChenLL25}, in which sign-based update rules are replaced by subgradients of more general convex functions.

\section{Conclusion}\label{sec:conclusion}

We introduced \Method{}, a simple and theoretically principled framework for balancing expert utilization in MoE models. At its core, \Method{} leverages a strictly convex, symmetric, and differentiable potential to encourage population-level routing probabilities toward uniformity, yielding an auxiliary objective whose online mirror descent updates are equivalent to maintaining an EMA of expert loads. Empirically, \Method\ consistently improves over standard baselines across a wide range of pretraining and fine-tuning tasks and benchmarks. The entire mechanism incurs negligible overhead and requires minimal changes to existing MoE pipelines, making it suitable for large-scale deployments.

\paragraph{Future work.} The breadth of the \Method\ framework opens several promising directions. A particularly important one is to develop a clearer understanding of the role of $\phi$, including which choices are optimal in different regimes, why certain choices induce more uniform routing dynamics, and how these choices scale. Beyond this, \Method\ provides a natural foundation for richer MoE settings, including expert parallelism with heterogeneous capacities \citep{wang2025hmoe}, multimodal data distributions \citep{zhao2021hierarchical}, and curriculum-based schedules for routing noise. Finally, we expect \Method\ to lead to compounding gains when combined with recent advances in optimization, including online subspace descent \citep{LiangLCL24}, Muon \citep{JordanJBYCNB24, LiuSY25}, H-Fac \citep{nguyen2024memory}, DeMo \citep{peng2026demo}, and cautious variants \citep{liang2024cautious, chen2025cautious}.

\section*{Acknowledgments}
We thank the anonymous reviewers, area chairs, and senior area chairs for their careful evaluation and constructive feedback, as well as Ziteng Wang and Hongcan Guo for helpful discussions.

\section*{Impact Statement}
This paper presents work whose goal is to advance the field of machine learning. There are many potential societal consequences of our work, none of which we feel must be specifically highlighted here.

\bibliography{ref}
\bibliographystyle{icml2026}

\onecolumn
\newpage
\appendix

\section*{Appendix}
\begin{algorithm}[t]
   \caption{Detailed version of Algorithm~\ref{alg:clb} }
   \label{alg:aux_detailed}
   \begin{algorithmic}[1]
      \REQUIRE task learning rate $\gamma$, strictly convex, symmetric, and differentiable function $\phi$, momentum $\eta \in (0, 1]$, \Method{} weight $\alpha>0$, history vectors $\vm^{(l)}\gets\0$ for each MoE layer $l=1,\dots,L$, model parameters $\theta$
      \WHILE{training}
         \STATE Sample mini-batch $\mathcal{B}$ from dataset.
         \STATE $\mathcal{L}_{\text{task}}, \{\mmp^{(l)}\}_{l=1}^L \leftarrow \textsc{Forward}(\theta, \mathcal{B})$ \hfill (compute task loss and mean routing probabilities)
         \STATE $\mathcal{L}_{\text{total}} \leftarrow \mathcal{L}_{\text{task}}$
         
         \vspace{0.3em}
         \FOR{each MoE layer $l = 1$ to $L$}
            \STATE $\vm^{(l)} \leftarrow (1-\eta)\vm^{(l)} + \eta\mmp^{(l)}$ \hfill (EMA of loads)
            \STATE $\mathcal{L}_{\text{aux}}^{(l)} \leftarrow \sum_{e=1}^E \mmp_e^{(l)} \cdot \textsc{StopGrad}(\nabla \phi(\vm^{(l)})_e)$
            \STATE $\mathcal{L}_{\text{total}} \leftarrow \mathcal{L}_{\text{total}} + \alpha \cdot E\cdot \mathcal{L}_{\text{aux}}^{(l)}$
         \ENDFOR
         \STATE $\theta \leftarrow \textsc{Optimizer}(\theta; \mathcal{L}_{\text{total}})$
      \ENDWHILE
   \end{algorithmic}
\end{algorithm}

\begin{algorithm}[t]
  \caption{\Method{} with Frequency EMA for one MoE layer}
  \label{alg:ema_f}
  \begin{algorithmic}[1]
    \REQUIRE strictly convex, symmetric, and differentiable function $\phi$, $\eta\in(0,1]$, $\alpha>0$, $\vm\gets\0$, routing frequencies $f_e$ defined in \eqref{eq:pandf}
    \STATE Compute routing probabilities $p_{i,e}$ for each token $i$
    \STATE $\mmp_e \leftarrow \frac{1}{T}\sum_{i=1}^T p_{i,e}$ for $e = 1,\dots,E$ \hfill (expert loads)
    \STATE $\mmp \gets (\mmp_1,\dots,\mmp_E)$
    \STATE $\vf_e \leftarrow \frac{1}{T}\sum_{i=1}^T \ind\!\left(e\in \text{Top-}k(p_{i,\cdot})\right)$ for $e=1,\dots,E$ \hfill (selection frequencies)
    \STATE $\mathbf{f} \gets (\vf_1,\dots,\vf_E)$
    \STATE $\vm \leftarrow (1-\eta)\vm + \eta\mathbf{f}$ \hfill (EMA of frequencies)
    \STATE $\mathcal{L}_{\text{aux}} \leftarrow \sum_{e=1}^E \nabla\phi(\vm)_e\, \mmp_e$
    \STATE Update model using $\nabla\big(\mathcal{L}_{\text{task}} + \alpha \cdot E \cdot\mathcal{L}_{\text{aux}}\big)$
  \end{algorithmic}
\end{algorithm}

\section{Notation}\label{app:notation}

$\0$ and $\1$ denote the all-zeros and all-ones vectors of the appropriate dimension, respectively. $\norm{\cdot}_p$ denotes the $\ell_p$ norm for $p\in[1,\infty]$. $\inprod{\cdot}{\cdot}$ denotes the standard inner product. $\ind(\cdot)$ denotes the 0-1 indicator function. $\overline{\R}$ denotes the extended real number line. \[ \Delta^E := \Brace{\vp \in \R_{\geq0}^E : \sum_{e=1}^E \vp_e = 1} \] denotes the probability simplex. The convex conjugate $\phi^*$ of a function $\phi:\R^d\to\overline{\R}$ is defined as \[ \phi^*(\vy)\defeq\sup_{\vx\in\R^d}\inprod{\vx}{\vy}-\phi(\vx),\quad\text{for all }\vy\in\R^d. \]

\textbf{MaxVio$_{\text{global}}$}~\cite{wang2024auxiliary} is a metric that quantifies load imbalance in MoE models, defined as \[
\text{MaxVio}_{\text{global}} = \frac{\max_e \text{Load}_e - \overline{\text{Load}}}{\overline{\text{Load}}},
\]
where
\begin{itemize}[nosep]
    \item $\text{Load}_e$ is the number of tokens assigned to expert $e$.
    \item $\overline{\text{Load}}$ is the average (ideal balanced) load across experts.
\end{itemize} A lower value indicates more balanced expert utilization, while a higher value reflects severe imbalance. It evaluates global load balance across the entire validation set, reflecting long-term efficiency and fairness in expert usage.

\textbf{Accuracy (ACC)} is a metric that measures the proportion of correct predictions made by a model. It is calculated as the number of correct predictions divided by the total number of predictions:

\[
\text{ACC} = \frac{\text{Number of Correct Predictions}}{\text{Total Number of Predictions}}.
\]

\textbf{Routed-token ratio} is a metric that quantifies expert specialization. Let $\mathcal{T}_d$ denote the set of tokens belonging to domain $d$, and let $g_l(i) \in \{0, \dots, E-1\}$ be the index of the expert selected by the router for token $i$ at layer $l$. 

The routing distribution ratio $R_{e,d}^{(l)}$ for expert $e$, domain $d$, and layer $l$ is calculated as the normalized frequency of token assignment:
\[
    R_{e,d}^{(l)} = \frac{\sum_{i \in \mathcal{T}_d} \ind(g_l(i) = e)}{|\mathcal{T}_d|}.
\]
A value of $R_{e,d}^{(l)} \approx \frac{1}{E}$ indicates a uniform, domain-agnostic distribution, whereas $R_{e,d}^{(l)} \gg \frac{1}{E}$ suggests strong domain specialization for expert $e$.

\section{Proofs}\label{app:proofs}

\begin{tcolorbox}[
    colback=gray!8,
    colframe=gray!8,
    boxrule=0pt,
    arc=1mm,
    left=2mm,
    right=2mm,
    top=1mm,
    bottom=1mm
]
\begin{lemma}[Uniform distribution uniquely minimizes strictly convex and symmetric functions]
\label{lem:uniform-minimizer}
Let $\phi:\Delta^E\to\mathbb{R}$ be symmetric, i.e.
\[
    \phi(\mathbf{P}\vp)=\phi(\vp)
    \qquad \text{for all permutation matrices } \mathbf{P} \in \{0,1\}^{E\times E},
\] and  strictly convex.
Let $\vu \defeq \frac{1}{E}\mathbf{1} \in \Delta^E$ denote the uniform distribution. Then
\[
    \phi(\vu)\leq\phi(\vp)
    \qquad \text{for all } \vp \in \Delta^E,
\]
with equality if and only if $\vp = \vu$.
\end{lemma}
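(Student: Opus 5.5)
The plan is to use the standard symmetrization argument: average $\vp$ over all of its permutations, and invoke Jensen's inequality together with symmetry.

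Fix $\vp \in \Delta^E$ and let $\mathcal{S}_E$ be the set of all $E!$ permutation matrices. Each $\mathbf{P}\vp$ lies in $\Delta^E$, since permuting coordinates preserves nonnegativity and the sum. The first step is to compute the average
\[
\frac{1}{E!}\sum_{\mathbf{P}\in\mathcal{S}_E}\mathbf{P}\vp .
\]
For each coordinate $e$ and each $j$, exactly $(E-1)!$ permutations send $\vp_j$ to position $e$. Hence the $e$-th coordinate of the average equals $\frac{(E-1)!}{E!}\sum_j \vp_j = \frac{1}{E}$, so the average is exactly $\vu$. Because $\Delta^E$ is convex, this average lies in the domain of $\phi$. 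Applying the finite Jensen inequality (convexity) and then symmetry gives
\[
\phi(\vu) \le \frac{1}{E!}\sum_{\mathbf{P}}\phi(\mathbf{P}\vp) = \phi(\vp).
\]

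For the equality case I would not use the multi-point Jensen inequality directly. A two-point argument is cleaner. Suppose $\vp \neq \vu$ and $\phi(\vp)=\phi(\vu)$. Set $\vq = \tfrac12(\vp+\vu) \in \Delta^E$. Strict convexity gives $\phi(\vq) < \tfrac12\phi(\vp)+\tfrac12\phi(\vu) = \phi(\vu)$. But the first part, applied to $\vq$, gives $\phi(\vu)\le\phi(\vq)$, which is a contradiction. So equality forces $\vp=\vu$. Conversely, $\vp=\vu$ trivially gives equality.

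The only mildly delicate point is the equality characterization. Deducing it from strict Jensen over $E!$ points would require noting that the points $\mathbf{P}\vp$ are not all equal when $\vp\neq\vu$, since a vector invariant under all permutations has equal coordinates and therefore equals $\vu$ on the simplex. The midpoint trick above avoids this bookkeeping entirely. The combinatorial averaging identity is routine.
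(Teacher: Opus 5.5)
Your proof is correct and matches the paper's argument for the inequality: average $\vp$ over all $E!$ permutations to obtain $\vu$, then apply Jensen and symmetry. The only difference is the equality case. The paper appeals to the strict $E!$-point Jensen inequality, which forces all $\mathbf{P}\vp$ to coincide and hence $\vp=\vu$; your midpoint argument uses only the two-point definition of strict convexity plus the inequality you already proved, which is slightly more self-contained.
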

\end{tcolorbox}

\begin{proof}
Let $\mathcal{S}_E$ denote the set of all $E!$ permutation matrices on $\mathbb{R}^{E\times E}$, and define the centroid
\[
    \bar\vp\defeq\frac{1}{E!}\sum_{\mathbf{P}\in \mathcal{S}_E}\mathbf{P}\vp.
\]
Since $\Delta^E$ is convex and each $\mathbf{P}\vp \in \Delta^E$, we have $\bar\vp \in \Delta^E$.

Fix any coordinate $j \in \{1,\dots,E\}$. Across all permutations, each entry $\vp_e$ appears in position $j$ exactly $(E-1)!$ times, since the remaining $E-1$ coordinates can be permuted arbitrarily. Hence
\[
    \bar\vp_j=\frac{1}{E!}\sum_{\mathbf{P}\in \mathcal{S}_E} (\mathbf{P}\vp)_j=\frac{(E-1)!}{E!}\sum_{e=1}^E \vp_e=\frac{1}{E}\cdot 1=\frac{1}{E},
\]
where the penultimate equality uses $\vp \in \Delta^E$. Since $j$ is arbitrary, it follows that $\bar\vp = \vu$.

Applying Jensen's inequality to the convex function $\phi$ with the uniform weights $\frac{1}{E!}$, which are nonnegative and sum to $1$,
\[
    \phi(\vu)=\phi(\bar\vp)=\phi\left(\frac{1}{E!}\sum_{\mathbf{P} \in \mathcal{S}_E} \mathbf{P}\vp\right)\leq\frac{1}{E!}\sum_{\mathbf{P}\in \mathcal{S}_E} \phi(\mathbf{P}\vp)=\phi(\vp),
\]
where the last equality uses the symmetry of $\phi$. Thus $\phi(\vu) \leq \phi(\vp)$.

Since $\phi$ is strictly convex, Jensen's inequality is strict unless all points in the average are identical, i.e.\ $\mathbf{P}\vp=\vp$ for all $\mathbf{P}\in\calS_E$. This holds if and only if $\vp$ has all coordinates equal, from which we conclude $\vp=\vu$.
\end{proof}

\begin{tcolorbox}[
    colback=gray!8,
    colframe=gray!8,
    boxrule=0pt,
    arc=1mm,
    left=2mm,
    right=2mm,
    top=1mm,
    bottom=1mm
]
\begin{lemma}[Mirror ascent step for the inner maximization]
\label{lem:mirror-ascent-step}
    Let $\phi : \mathcal{D}_\phi \to \overline{\mathbb{R}}$ be of Legendre type (proper, strictly convex, and essentially smooth) on an open convex domain $\mathcal{D}_\phi \subseteq \mathbb{R}^d$. Then $\phi^*$ is of Legendre type on $\calD_{\phi^*}\defeq\nabla\phi(\calD_\phi)$, and the gradient maps $\nabla\phi : \mathcal{D}_\phi \to \mathcal{D}_{\phi^*}$ and
    $\nabla\phi^* : \mathcal{D}_{\phi^*} \to \mathcal{D}_\phi$ are inverse functions, i.e.
    \[
        \vm = \nabla \phi^*(\mq)\quad\Longleftrightarrow\quad \mq = \nabla \phi(\vm)
        \qquad \text{for all }\vm \in \mathcal{D}_\phi\text{ and }\mq \in \mathcal{D}_{\phi^*}.
    \]
    Fix $\mmp_t \in \mathcal{D}_\phi$ and consider the inner maximization objective of \eqref{eq:MLB-minmax}
    \[
        F(\mq;\mmp_t)\defeq\inprod{\mmp_t}{\mq}-\phi^*(\mq).
    \] Let $\mq_t \in \mathcal{D}_{\phi^*}$ be the current iterate, with corresponding primal representation $\vm_t \defeq \nabla \phi^*(\mq_t) \in \mathcal{D}_\phi$. Then, for any step size $\eta \in (0, 1]$, the mirror ascent update on $F$ with mirror map $\phi^*$,
    \[
        \mq_{t+1}\gets\argmax_{\mq \in \mathcal{D}_{\phi^*}}\Brace{\inprod{\nabla_\mq F(\mq_t;\mmp_t)}{\mq}-\frac{1}{\eta}D_{\phi^*}(\mq,\mq_t)},
    \]
    where $D_{\phi^*}(\vp, \vq) \defeq \phi^*(\vp) - \phi^*(\vq) - \inprod{\nabla \phi^*(\vq)}{\vp - \vq}$ is the Bregman divergence induced by $\phi^*$, admits the closed form
    \[
        \vm_{t+1}\gets(1 - \eta)\vm_t + \eta\mmp_t,
        \qquad
        \mq_{t+1}\gets\nabla \phi(\vm_{t+1}).
    \]
\end{lemma}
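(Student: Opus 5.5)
The argument has two parts. The first is the Legendre duality statement, which I will take from classical convex analysis (Rockafellar, Theorem 26.5) rather than reprove. If $\phi$ is of Legendre type on the open convex set $\calD_\phi$, then $\phi^*$ is of Legendre type on $\mathrm{int}(\mathrm{dom}\,\phi^*)=\nabla\phi(\calD_\phi)=\calD_{\phi^*}$. Moreover, $\nabla\phi$ is a bijection from $\calD_\phi$ onto $\calD_{\phi^*}$ with inverse $\nabla\phi^*$.

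For intuition, the equivalence $\vm=\nabla\phi^*(\mq)\iff\mq=\nabla\phi(\vm)$ follows from Fenchel--Young. The statement $\mq=\nabla\phi(\vm)$ says that $\vm$ attains the supremum defining $\phi^*(\mq)$. Essential smoothness of $\phi^*$ makes $\partial\phi^*(\mq)=\{\nabla\phi^*(\mq)\}$, and $\mq\in\partial\phi(\vm)$ iff $\vm\in\partial\phi^*(\mq)$. Strict convexity of $\phi$ is what makes $\phi^*$ differentiable.

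The second part is the closed form of the mirror step. I would compute $\nabla_\mq F(\mq_t;\mmp_t)=\mmp_t-\nabla\phi^*(\mq_t)=\mmp_t-\vm_t$. The objective
\[
G(\mq)=\inprod{\mmp_t-\vm_t}{\mq}-\tfrac1\eta D_{\phi^*}(\mq,\mq_t)
\]
is then strictly concave on $\calD_{\phi^*}$, because $D_{\phi^*}(\cdot,\mq_t)$ equals $\phi^*$ plus an affine term and $\phi^*$ is strictly convex. Since $\nabla_\mq D_{\phi^*}(\mq,\mq_t)=\nabla\phi^*(\mq)-\vm_t$, the first-order condition is
\[
\mmp_t-\vm_t-\tfrac1\eta\bigl(\nabla\phi^*(\mq)-\vm_t\bigr)=0,
\quad\text{i.e.}\quad
\nabla\phi^*(\mq)=\vm_t+\eta(\mmp_t-\vm_t)=(1-\eta)\vm_t+\eta\mmp_t.
\]
Call the right-hand side $\vm_{t+1}$. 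It is a convex combination of $\vm_t,\mmp_t\in\calD_\phi$ for $\eta\in(0,1]$, so $\vm_{t+1}\in\calD_\phi$ by convexity of the domain. By the inverse-map property, $\mq_{t+1}=\nabla\phi(\vm_{t+1})\in\calD_{\phi^*}$ solves the stationarity equation. Strict concavity then makes this stationary point the unique global maximizer, which gives the stated closed form.

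The main obstacle is rigor at the boundary. We must make sure the argmax over the open set $\calD_{\phi^*}$ is attained in the interior and is characterized by the gradient condition. The construction settles this directly: it exhibits an interior point where the gradient vanishes, and for a concave differentiable function on an open convex set a vanishing gradient is sufficient for global optimality. So no coercivity argument is needed, and Legendre-type essential smoothness only enters to guarantee that the dual maps are well-defined bijections.
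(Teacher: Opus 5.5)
Your proposal is correct and follows the paper's proof essentially step for step. You cite Rockafellar's Theorem 26.5 for the Legendre duality, compute $\nabla_{\mq}F(\mq_t;\mmp_t)=\mmp_t-\vm_t$, set the gradient of the mirror-step objective to zero to obtain $\nabla\phi^*(\mq_{t+1})=(1-\eta)\vm_t+\eta\mmp_t$, and map back through $\nabla\phi$. Your added observation that the step objective is strictly concave, so the stationary point you exhibit is the unique global maximizer, fills a gap the paper leaves implicit: the paper uses the first-order condition only as a necessary condition and tacitly assumes the argmax exists.
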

\end{tcolorbox}
\begin{proof}
    The first assertion is a classical result in convex analysis \citep[Theorem 26.5]{Rockafellar70}. Differentiating $F$ with respect to $\mq$ at the current iterate $\vq_t$ yields
    \begin{equation}\label{eq:gradF}
        \nabla_{\mq} F(\mq_t; \mmp_t)=\mmp_t - \nabla \phi^*(\mq_t)=\mmp_t - \vm_t.
    \end{equation}
    The first-order optimality condition for the mirror ascent update \cite{beck2003mirror} requires that the gradient of its objective in $\mq$ vanish at $\mq_{t+1}$, i.e.
    \begin{equation}\label{eq:foc}
        \nabla_{\mq} F(\mq_t; \mmp_t)-\frac{1}{\eta}(\nabla \phi^*(\mq_{t+1}) - \nabla \phi^*(\mq_t))=\0.
    \end{equation}
    Substituting \eqref{eq:gradF} and $\nabla \phi^*(\mq_t) = \vm_t$ into \eqref{eq:foc} and rearranging yields the primal-space update
    \[
        \vm_{t+1}=\nabla\phi^*(\vq_{t+1})\gets\vm_t + \eta(\mmp_t - \vm_t)=(1 - \eta)\vm_t + \eta\mmp_t,
    \]
    which is a convex combination of $\vm_t$ and $\mmp_t$. Since $\mathcal{D}_\phi$ is convex, the iterate $\vm_{t+1}$ remains in $\mathcal{D}_\phi$. Mapping back to the dual space gives
    \[
        \mq_{t+1}\gets\nabla \phi(\vm_{t+1}),
    \]
    which completes the proof.
\end{proof}

\begin{remark}
    If $\phi:\R^d\to\R$ is strictly convex and differentiable, then it is of Legendre type on $\R^d$.
\end{remark}

\section{Hyperparameters}
\label{app:hp}

We set the tokens-per-parameter budget for all MoE pretraining runs using the MoE scaling law of \citet{tian2025towards}. For a given total training compute \(C\), the compute-per-token \(M^{\mathrm{opt}}_{\mathrm{MoE}}\) and the optimal number of training tokens \(D^{\mathrm{opt}}_{\mathrm{MoE}}\) are given by \[
M^{\mathrm{opt}}_{\mathrm{MoE}} = 0.1915 \, C^{0.5095},
\qquad
D^{\mathrm{opt}}_{\mathrm{MoE}} = 5.2232 \, C^{0.4905}.
\]
The corresponding optimal tokens-per-parameter ratio is
\[
\mathrm{tpp}^{\mathrm{opt}}(C)
=
\frac{D^{\mathrm{opt}}_{\mathrm{MoE}}}{M^{\mathrm{opt}}_{\mathrm{MoE}}}
=
\frac{5.2232}{0.1915}\, C^{0.4905 - 0.5095}
\approx 27.3\, C^{-0.019},
\]
and we choose the total number of training tokens for each configuration to match \(\mathrm{tpp}^{\mathrm{opt}}(C)\) induced by our target compute \(C\).

\begin{table}[t]
\centering
\small
\setlength{\tabcolsep}{4.0pt}
\caption{\textbf{Model hyperparameters.} Comparison of model configurations across different sizes, ordered by parameter count.}
\label{tab:model_configs_reorganized}
\begin{tabular}{lcccccccccccc}
\toprule
 & \multicolumn{6}{c}{\textit{Architecture}} & \multicolumn{3}{c}{\textit{Training}} & \multicolumn{3}{c}{\textit{Optimization}} \\
\cmidrule(r){2-7} \cmidrule(lr){8-10} \cmidrule(l){11-13}
\textbf{Model} & $d_{\text{model}}$ & $L$ & $H$ & $d_{\text{head}}$ & FFN & Act. & Seq. Len & Batch & Total Steps & Peak LR & Warmup & WD \\
\midrule
\textbf{111M} & 512  & 8  & 8 & 64  & 2048 & gelu & 2048 & 256   & 10,758 & 1.65e-3 & 10\% & 0.261 \\
\textbf{338M} & 1024 & 8  & 8 & 128 & 4096 & gelu & 2048 & 256   & 38,658 & 1.37e-3 & 10\% & 0.276 \\
\textbf{588M} & 1280 & 10 & 10 & 128 & 5120 & gelu & 2048 & 256  & 20,500 & 1.2e-3  & 10\% & 0.29  \\
\textbf{986M} & 1536 & 12 & 8 & 256 & 6144 & gelu & 2048 & 512   & 60,751 & 1.0e-3  & 10\% & 0.3   \\
\bottomrule
\end{tabular}
\end{table}
\begin{figure}[t]
    \centering
    \includegraphics[width=0.7\linewidth]{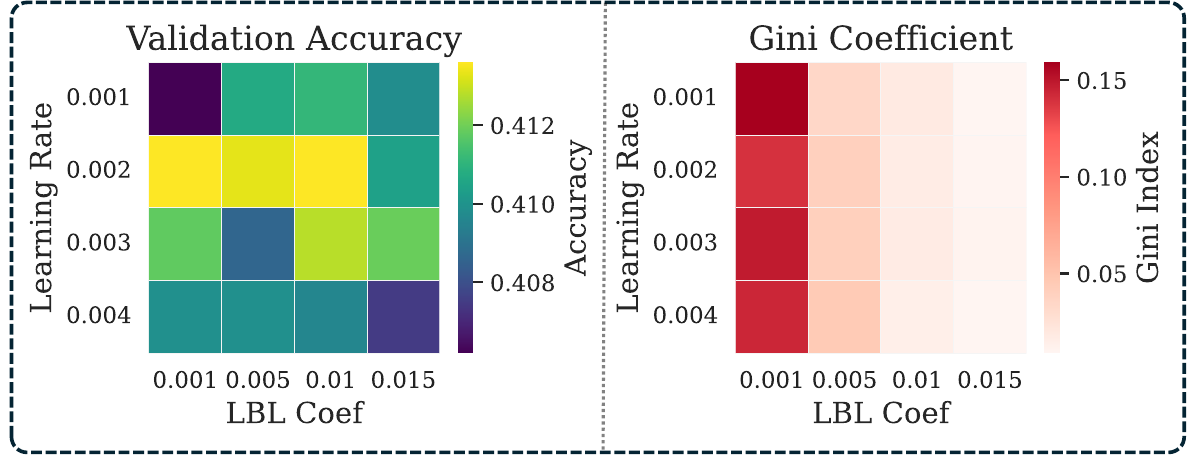}
    
    \caption{\textbf{Hyperparameter sensitivity analysis.} Heatmaps displaying \textbf{Validation Accuracy} (left) and \textbf{Gini Coefficient} (right) across varying Learning Rates ($\gamma \in \{1e\text{-}3, \dots, 4e\text{-}3\}$) and \Method{} loss coefficient ($\alpha \in \{0.001, \dots, 0.015\}$). While accuracy remains robust (peak $0.4136$), increasing $\alpha$ drastically reduces the Gini coefficient.}
    
    \label{fig:hyperparam_heatmap}
\end{figure}

For the hyperparameters listed in Table~\ref{tab:model_configs_reorganized}, our learning rate search employed a quasi-logarithmic grid spanning to $1\times10^{-5}$ to $1\times10^{-1}$, with denser sampling in the $10^{-4}$ to $10^{-2}$ range where transformer models typically achieve optimal performance. The grid included standard decade values (e.g., 0.001, 0.01) as well as intermediate points within each logarithmic interval (e.g., 0.2, 0.3, 0.5, 0.8 scaled to each decade), totaling 24 distinct learning rate values. For the learning rate schedule, we systematically evaluated warmup ratios of 0, 0.05, 0.1, 0.2, 0.3, 0.4, 0.5 followed by cosine annealing decay.

\section{Additional Experiments}
\label{app:ablations}

Algorithm~\ref{alg:aux_detailed} details the complete implementation of the routing mechanism introduced in Algorithm~\ref{alg:clb}. A key distinction in this detailed formulation is the load tracking strategy: we specifically utilize the EMA of expert assignment probabilities, whereas alternative approaches typically track the EMA of selection frequencies ($f_e$). 

To systematically compare our \Method{}, ST-MoE, and loss-free MoE models, we categorize our experimental configurations into three distinct scaling regimes as summarized in Table~\ref{tab:moe_configurations}. The \textbf{Active-Parameter} study follows standard scaling principles by varying the model capacity from 111M to 986M active parameters while maintaining a fixed routing sparsity of 2-of-16. In the \textbf{Granularity} study, we investigate the trade-off between expert specialization and parameter count by varying the factor $G$; specifically, we increase the total number of experts $E$ while proportionally shrinking the hidden dimension of each expert’s feed-forward network to ensure per-token FLOPs remain invariant. Finally, the \textbf{Expert-Count} study isolates the impact of the activation ratio ($A/E$) by holding the individual expert size and compute budget constant while expanding the total expert pool $E$ from 8 to 128. This experimental design allows us to disentangle the benefits of total model capacity from those of routing density and expert specialization.

\begin{table}[t]
\centering
\caption{\textbf{Summary of MoE scaling study configurations.} All studies are conducted while keeping per-token FLOPs approximately constant.}
\label{tab:moe_configurations}
\small
\begin{tabular}{llcl}
\toprule
\textbf{Scaling Axis} & \textbf{Variable ($x$)} & \textbf{Fixed Constraints} & \textbf{Configurations / Values} \\
\midrule
\multirow{2}{*}{\textbf{Active-Parameter}} & Active Params ($N$) & $E=16$ & 111M, 338M, \\
& & $A=2$ & 588M, 986M \\
\midrule
\multirow{2}{*}{\textbf{Granularity}} & Factor ($G$) & Model Size ($M$) & $G \in \{2, 4, 8, 16, 32\}$ \\
& & Ratio ($A/E$) & ($E$ scales $16 \to 256$) \\
\midrule
\multirow{2}{*}{\textbf{Expert-Count}} & Total Experts ($E$) & Compute ($M$), $A=2$ & \multirow{2}{*}{$E \in \{8, 16, 32, 64, 128\}$} \\
& (Sparsity) & Expert Size & \\
\bottomrule
\end{tabular}
\end{table}

\begin{figure*}[t]
    \centering
    \includegraphics[width=\linewidth]{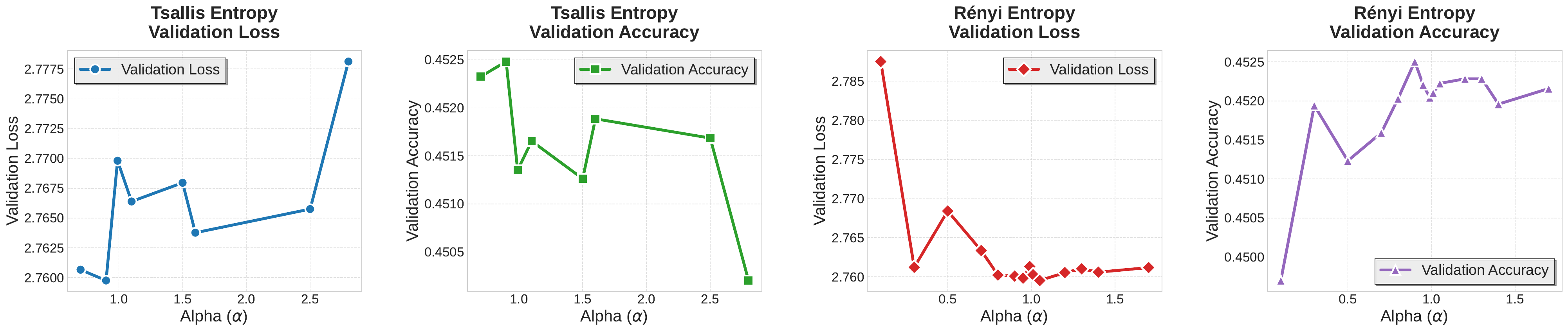}
    \vspace{-2em}
    \caption{\textbf{Effect of entropy order $\alpha$ on validation performance.} Validation loss and accuracy are reported after 20.5K training steps for Tsallis and R\'enyi entropy variants across different values of $\alpha$. Both objectives exhibit a stable accuracy profile near $\alpha \approx 0.9$--$1.0$. Conversely, extreme values of $\alpha$ degrade validation loss and/or accuracy, indicating that moderate entropy orders yield the most robust performance. \textit{Model configuration:} $\sim$588M active parameters per token, with a total memory footprint of $\sim$3.34B parameters.}
    \vspace{-1em}
    \label{fig:entropy_losses}
\end{figure*}

\begin{table}[t]
\centering
\caption{\textbf{Mixed benchmark.} We combine 1{,}500 examples from the seven benchmarks used in per-benchmark finetuning, and combine them into a mixed finetuning dataset. Similar to per-benchmark finetuning, each example contains high quality chain-of-thought reasoning from a strong teacher model (OpenAI GPT-5.2). For benchmarks with less than total 1{,}500 examples, we select all of its training distribution. We finetune with LoRA rank $r=4$ on one epoch with learning rate 2e-5 (approximately 500 steps). We finetune Deepseek-MoE-16B-Chat and Deepseek-V2-Lite-Chat models and show the accuracy of each benchmark in the evaluation set.}
\begin{adjustbox}{width=\textwidth,center}
\label{tab:mixed_benchmark}
\small
\setlength{\tabcolsep}{4.2pt}
\renewcommand{\arraystretch}{1.08}

\begin{tabular}{@{}l l c c c c c c c c@{}}
\toprule
\multirow{2}{*}{\textbf{Model}} & \multirow{2}{*}{\textbf{Method}} &
\multicolumn{4}{c}{\textbf{Multi-Domain}} &
\multicolumn{1}{c}{\textbf{Code}} &
\multicolumn{2}{c}{\textbf{Math}} &
\multicolumn{1}{c}{\textbf{Avg}} \\
\cmidrule(lr){3-6}\cmidrule(lr){7-7}\cmidrule(lr){8-9}\cmidrule(lr){10-10}
& & \textbf{BBH} & \textbf{GLUE} & \textbf{LiveBench} & \textbf{GPQA} &
\textbf{HumanEval} & \textbf{GSM8K} & \textbf{Math500} & \textbf{Avg} \\
\midrule

\multirow{3}{*}{\makecell[l]{DeepSeek-MoE\\-Chat}}
& Frozen checkpoint
& \score{33.07}{2.08}
& \score{55.97}{0.64}
& \score{5.92}{0.62}
& \textbf{\score{28.91}{1.28}}
& \textbf{\score{39.63}{3.78}}
& \textbf{\score{57.63}{0.93}}
& \score{14.80}{1.59}
& 33.70 \\
& ST-MoE
& \score{43.05}{2.19}
& \score{67.72}{0.60}
& \score{13.79}{0.91}
& \score{28.75}{1.28}
& \score{29.27}{3.55}
& \score{53.33}{0.94}
& \textbf{\score{15.54}{1.93}}
& 35.92 \\
& \ours{\textbf{Ours}}
& \ours{\textbf{\score{44.22}{2.10}}}
& \ours{\textbf{\score{68.51}{0.56}}}
& \ours{\textbf{\score{14.15}{0.87}}}
& \ours{\score{27.88}{1.26}}
& \ours{\score{28.05}{3.51}}
& \ours{\score{54.91}{0.92}}
& \ours{\score{15.12}{1.87}}
& \ours{\textbf{36.12}} \\
\midrule

\multirow{3}{*}{\makecell[l]{DeepSeek-V2\\-Lite}}
& Frozen checkpoint
& \score{35.42}{2.11}
& \score{53.12}{0.64}
& \score{13.93}{0.91}
& \score{19.98}{1.13}
& \textbf{\score{37.20}{3.73}}
& \textbf{\score{69.20}{0.87}}
& \score{19.40}{1.77}
& 35.46 \\
& ST-MoE
& \score{47.95}{2.21}
& \textbf{\score{65.58}{0.61}}
& \score{18.80}{1.03}
& \score{25.48}{1.23}
& \score{32.93}{3.67}
& \score{68.12}{0.88}
& \textbf{\score{24.29}{2.28}}
& 40.45 \\
& \ours{\textbf{Ours}}
& \ours{\textbf{\score{48.82}{2.15}}}
& \ours{\score{65.10}{0.58}}
& \ours{\textbf{\score{19.34}{1.00}}}
& \ours{\textbf{\score{26.21}{1.19}}}
& \ours{\score{33.55}{3.58}}
& \ours{\score{69.05}{0.85}}
& \ours{\score{23.94}{2.21}}
& \ours{\textbf{40.86}} \\
\bottomrule
\end{tabular}
\end{adjustbox}
\end{table}

\end{document}